%% file: CounteractiveReinforcementLearning_NeurIPS2025Spotlight.tex
\documentclass{article}

\input{math_commands.tex}

    \usepackage[final]{neurips_2025}

\usepackage[utf8]{inputenc} % allow utf-8 input
\usepackage[T1]{fontenc}    % use 8-bit T1 fonts
\usepackage{hyperref}       % hyperlinks
\usepackage{url}            % simple URL typesetting
\usepackage{booktabs}       % professional-quality tables
\usepackage{amsfonts}       % blackboard math symbols
\usepackage{nicefrac}       % compact symbols for 1/2, etc.
\usepackage{microtype}      % microtypography
\usepackage{xcolor}         % colors

\usepackage{algorithmic}
\usepackage{algorithm}

\usepackage{amsmath}
\usepackage{amssymb}
\usepackage{mathtools}
\usepackage{amsthm}
\usepackage{varwidth}

\theoremstyle{plain}
\newtheorem{theorem}{Theorem}[section]
\newtheorem{proposition}[theorem]{Proposition}

\theoremstyle{definition}
\newtheorem{definition}[theorem]{Definition}

\theoremstyle{remark}

\usepackage[utf8]{inputenc} % allow utf-8 input
\usepackage[T1]{fontenc}    % use 8-bit T1 fonts
\usepackage{hyperref}       % hyperlinks
\usepackage{url}            % simple URL typesetting
\usepackage{booktabs}       % professional-quality tables
\usepackage{amsfonts}       % blackboard math symbols
\usepackage{nicefrac}       % compact symbols for 1/2, etc.
\usepackage{microtype}      % microtypography
\usepackage{xcolor}         % colors

\usepackage{amsmath}
\usepackage{amssymb}
\usepackage{mathtools}
\usepackage{amsthm}
\usepackage{vwcol}

\usepackage{multicol}
\usepackage{tikz} % nice language for creating drawings and diagrams
\usepackage{amsmath,amssymb,amsthm,textcomp}
\usepackage{comment}
\usepackage{wrapfig}
\usepackage{stackengine}
\usepackage{pifont}% http://ctan.org

\newlength{\RoundedBoxWidth}
\newsavebox{\GrayRoundedBox}
\newenvironment{GrayBox}[1][\dimexpr\textwidth-4.5ex]%
{\setlength{\RoundedBoxWidth}{\dimexpr#1}
\begin{lrbox}{\GrayRoundedBox}
\begin{minipage}{\RoundedBoxWidth}}%
{   \end{minipage}
\end{lrbox}
\begin{center}
\begin{tikzpicture}%
\draw node[draw=black,fill=black!7,rounded corners,%
inner sep=2ex,text width=\RoundedBoxWidth]%
{\usebox{\GrayRoundedBox}};
\end{tikzpicture}
\end{center}}

\usepackage[textsize=tiny]{todonotes}

\title{Counteractive RL: Rethinking Core Principles for Efficient and Scalable Deep Reinforcement Learning}

\author{
Ezgi Korkmaz
\thanks{Correspondence to Ezgi Korkmaz: \texttt{ezgikorkmazmail@gmail.com} }
}

\begin{document}
\maketitle

\begin{abstract}
Following the pivotal success of learning strategies to win at tasks, solely by interacting with an environment without any supervision, agents have gained the ability to make sequential decisions in complex MDPs. Yet, reinforcement learning policies face exponentially growing state spaces in high dimensional MDPs resulting in a dichotomy between computational complexity and policy success. In our paper we focus on the agent’s interaction with the environment in a high-dimensional MDP during the learning phase and we introduce a theoretically-founded novel paradigm based on experiences obtained through counteractive actions. 
Our analysis and method provide a theoretical basis for efficient, effective, scalable and accelerated learning, and further comes with zero additional computational complexity while leading to significant acceleration in training.
We conduct extensive experiments in the Arcade Learning Environment with high-dimensional state representation MDPs. The experimental results further verify our theoretical analysis, and our method achieves significant performance increase with substantial sample-efficiency in high-dimensional environments.
\end{abstract}

\section{Introduction}

Utilization of deep neural networks as function approximators enabled learning functioning policies in high-dimensional state representation MDPs \citep{mn15}. Following this initial work, the current line of research trains deep reinforcement learning policies to solve highly complex problems from game solving \citep{hado16,julian20} to mathematical and scientific reasoning of large language models \citep{deepseek25}.
Yet, there are still remaining unsolved problems restricting the current capabilities of reinforcement learning in exponentially growing state spaces.
One of the main intrinsic open problems in deep reinforcement learning research is sample complexity and experience collection in high-dimensional state representation MDPs. While prior work extensively studied the policy's interaction with the environment
in bandits and tabular reinforcement learning, and proposed various algorithms and techniques optimal to the tabular form or the bandit context \citep{claude94,kearn02, ronen02, kakade03, roy19}, experience collection in deep reinforcement learning remains an open challenging problem while practitioners repeatedly employ quite simple yet effective techniques (i.e. $\epsilon$-greedy) \citep{white91,flen22,hado16, wang16,jess20,hado23, korkmaz24icml, thomas25, akshay24}.

Despite the provable optimality of the techniques designed for the tabular or bandit setting, they generally rely strongly on the assumptions of tabular reinforcement learning, and in particular on the ability to record tables of statistical estimates for every state-action pair which have size growing with the number of states times the number of actions.
Hence, these assumptions are far from what is being faced in the deep reinforcement learning setting where states and actions can be parametrized by high-dimensional representations.
Thus, in high-dimensional complex MDPs, for which deep neural networks are used as function approximators, the efficiency and the optimality of the methods proposed for tabular settings do not transfer well to deep reinforcement learning \citep{kakade03}.
Hence, in deep reinforcement learning research still, naive and standard techniques (e.g. $\epsilon$-greedy) are preferred over both the optimal tabular techniques and over the particular recent experience collection techniques targeting only high scores for particular games \citep{mn15, hado16, wang16, bell17, dabney18, flen22, korkmaz24icml, hado23}.

Sample efficiency still remains to be one of the main challenging problems restricting research progress in reinforcement learning. The magnitude of the number of samples required to learn and adapt continuously is one of the main limiting factors preventing current state-of-the-art deep reinforcement learning algorithms from being deployed in many diverse settings from large language model reasoning to the physical world, but most importantly one of the main challenges that needs to be dealt with on the way to building neural policies that can generalize and adapt continuously in non-stationary environments.
Hence, given these limitations in our paper we aim to seek answers for the following questions:
\begin{itemize}
\item \textit{How can we construct policies that have the ability to collect novel experiences in high-dimensional complex MDPs without any additional computational complexity?}
\item \textit{What is the natural theoretical motivation that can be used to design a zero-cost experience collection strategy while achieving high sample efficiency?}
\end{itemize}

To be able to answer these questions, in our paper we focus on environment interactions in deep reinforcement learning and make the following contributions:
\paragraph{Contributions.}
We introduce a fundamental theoretically well-motivated paradigm for reinforcement learning
based on state-action value function minimization, which we call counteractive temporal difference learning. Our approach centers on solely reconstituting and conceptually shifting the core principles of learning and as a result increases the information gained from the environment interactions of the policy in a given MDP without adding computational complexity.
We first provide the theoretical analysis in Section \ref{worstq}, explaining why and how minimization will result in higher temporal difference.
We then as a first step demonstrate the efficacy of counteractive temporal difference learning in a motivating example, i.e. the canonical chain MDP setup, in Section \ref{motivatingexample}. The results in the chain MDP verify the theoretical analysis provided in Section \ref{worstq} that counteractive temporal difference learning increases temporal difference obtained from the experiences.
Furthermore, we conduct an extensive study in the Arcade Learning Environment 100K benchmark with the state-of-the-art algorithms and demonstrate that our temporal difference learning algorithm CoAct TD learning improves performance by $248\%$ across the entire benchmark compared to the baseline algorithm.
We demonstrate the efficacy of our proposed CoAct TD Learning algorithm in terms of sample-efficiency. Our method based on maximizing novel experiences via minimizing the state-action value function reaches approximately to the same performance level as model-based deep reinforcement learning algorithms, without building and learning any model of the environment.
Finally, we show that CoAct TD learning is a fundamental improvement over canonical methods, it is modular and a plug-and-play method, and any algorithm that uses temporal difference learning can be immediately and simply switched to CoAct TD learning.

\section{Background and Preliminaries}

\label{exploration}
The reinforcement learning problem is formalized as a Markov Decision Process (MDP) \citep{puterman94} $\mathcal{M} =\langle \mathcal{S}, \mathcal{A}, r, \gamma, \rho_0, \mathcal{T} \rangle $ that contains a continuous set of states $s \in \mathcal{S}$, a set of actions $a \in \mathcal{A}$, a probability transition function $\mathcal{T}(s,a,s')$ on $\mathcal{S}\times \mathcal{A} \times \mathcal{S}$, discount factor $\gamma$,
a reward function $r(s,a): \mathcal{S} \times \mathcal{A} \to \R$ with initial state distribution $\rho_0$.
A policy $\pi(s,a):  \mathcal{S} \times \mathcal{A} \to [0,1]$ in an MDP assigns a probability distribution over actions for each state $s \in \mathcal{S}$. The main goal in reinforcement learning is to learn an optimal policy $\pi$ that maximizes the discounted expected cumulative rewards $\mathcal{R} = \mathbb{E}_{a_t \sim \pi(s_t,\cdot),s_{t+1} \sim \mathcal{T}(s_t,a_t,\cdot)} \sum_t \gamma^t r(s_t,a_t)$.
In $Q$-learning \citep{watkins89} the learned policy is parameterized by a state-action value function $Q:\mathcal{S}\times \mathcal{A}\to \R$, which represents the value of taking action $a$ in state $s$. The optimal state-action value function is learnt via iterative Bellman update
\[
Q(s_t,a_t) \leftarrow Q(s_t,a_t) + \alpha [ r(s_t, a_t) + \gamma \max_a Q(s_{t+1}, a) - Q(s_t, a_t)]
\]
where $\max_a Q(s_{t+1},a) = \mathcal{V}(s_{t+1})$.
Let $a^*$ be the action maximizing the state-action value function, $a^*(s) = \argmax_a Q(s,a)$, in state $s$. Once the $Q$-function is learnt the policy is determined via taking action $a^*(s)$.
Temporal difference learning \citep{sutton88} improves the estimates of the state-action values in each iteration via the Bellman Operator \citep{bellman57}
\begin{align*}
(\Omega^* Q)(s,a) =  \mathbb{E}_{s' \sim \mathcal{T}(s,a,\cdot)} [ r(s,a) + \gamma  \max_{a'} Q(s',a')].
\end{align*}
For distributional reinforcement learning, QRDQN is an algorithm that is based on quantile regression \citep{koenker01, koenker05} temporal difference learning
\begin{align*}
\Omega \mathcal{Z}(s,a) = r(s,a) + \gamma \mathcal{Z}(s',\argmax_{a'} \mathbb{E}_{z \sim \mathcal{Z}(s',a')}[z]) \:\:
 \textrm{and} \:\: \mathcal{Z}(s,a) \coloneqq \dfrac{1}{N} \sum_{i=1}^N \delta_{\theta_i(s,a)} 
\end{align*}
where $\mathcal{Z}_\theta \in \mathcal{Z}_Q$ maps state-action pairs to a probability distribution over values.
In deep reinforcement learning, the state space or the action space is large enough that it is not possible to learn and store the state-action values in a tabular form. Thus, the $Q$-function is approximated via deep neural networks. 
In deep double-$Q$ learning, two $Q$-networks are used to decouple the $Q$-network deciding which action to take and the $Q$-network to evaluate the action taken 
\[
\theta_{t+1} = \theta_{t} + \alpha (r(s_t,a_t)
+ \gamma Q(s_{t+1},  \argmax_a Q(s_{t+1},a;\theta_t);\hat{\theta}_t)
- Q(s_t,a_t;\theta_t)) \nabla_{\theta_t} Q(s_t,a_t;\theta_t).
\]
Current deep reinforcement learning algorithms use $\epsilon$-greedy during training \citep{wang16, mn15, hado16, jess20,flen22,hado23, akshay24, thomas25, korkmazaaai26}. In particular, the $\epsilon$-greedy \citep{white91} algorithm takes an action $a_k \sim \mathcal{U}(\mathcal{A})$ with probability $\epsilon$ in a given state $s$, i.e. $\pi(s,a_k)=\frac{\epsilon}{\lvert\mathcal{A}\rvert}$, and takes an action $a^* = \argmax_a Q(s,a)$ with probability $1-\epsilon$, i.e.
\[
\pi(s,\argmax_a Q(s,a))=1-\epsilon+\frac{\epsilon}{\lvert\mathcal{A}\rvert}
\]
While a family of algorithms have been proposed based on counting state visitations (i.e. the number of times action $a$ has been taken in state $s$ by time step $t$) with provable optimal regret bounds using the principal of optimism in the face of uncertainty in the tabular MDP setting, yet incorporating these count-based methods in high-dimensional state representation MDPs requires substantial complexity including training additional deep neural networks to estimate counts or other uncertainty metrics.
As a result, many state-of-the-art deep reinforcement learning algorithms still use simple, randomized experience collection methods based on sampling a uniformly random action with probability $\epsilon$ \citep{mn15,hado16, wang16, jess20,flen22,korkmazaaai23,hado23}.
In our experiments, while providing comparison against canonical methods, we also compare our method against computationally complicated and expensive
techniques such as noisy-networks that is based on the injection of random noise with additional layers in the deep neural network \citep{hessel2018rainbow} in Section \ref{largeexp}, and count based methods in Section \ref{motivatingexample} and Section \ref{temporal}.
We further highlight that our method is a fundamental theoretically motivated improvement of temporal difference learning. 
Thus, any algorithm that is based on temporal difference learning can immediately be switched to CoAct TD learning.

\section{Maximizing Temporal Difference with Counteractive Actions}
\label{worstq}

Seeking experiences that contain high information has long been the focus of reinforcement learning \citep{schmid91,jurgen99, moore93} and more particularly the experiences that correspond to higher temporal difference \citep{moore93}.
In this section we will provide the theoretical analysis for our proposed algorithm counteractive TD learning. 
Section \ref{largeexp} further provides the experimental results verifying the theoretical predictions. 
In deep reinforcement learning the state-action value function is initialized with random weights \citep{mn15, mn16, hado16, wang16, tom16,oh20, julian20, hubert21}. 
During a large portion of the training prior to convergence, the $Q$-function behaves as a random function rather than providing an accurate representation of the optimal state-action values while interacting with new experiences in high-dimensional MDPs as the learning continues.
In particular, in high-dimensional environments in a significant portion of the training the $Q$-function, on average, assigns approximately similar values to states that are similar, and has little correlation with the immediate rewards. Hence, let us formalize these facts on the state-action value function in the following definitions.

\begin{definition}[\emph{$\eta$-uninformed}]
\label{etauninformed}
Let $\eta > 0$. A $Q$-function parameterized by weights $\theta \sim \Theta$ is $\eta$-uninformed if for any state $s \in \mathcal{S}$ with $a^{\textrm{min}} = \argmin_a Q_\theta(s,a)$ we have
\begin{align*}
\lvert\mathbb{E}_{\theta \sim \Theta}[r(s_t,a^{\textrm{min}})] - \mathbb{E}_{a\sim \mathcal{U}(\mathcal{A})}[r(s_t,a)]\rvert < \eta.
\end{align*}
\end{definition}

\begin{definition}[\emph{$\delta$-smooth}]
\label{tderrdef}
Let $\delta > 0$. A $Q$-function parameterized by weights $\theta \sim \Theta$ is $\delta$-smooth if for any state $s \in \mathcal{S}$ and action $\hat{a} = \hat{a}(s,\theta)$ with $s' \sim \mathcal{T}(s,\hat{a},\cdot)$ we have
\begin{align*}
\lvert\mathbb{E}_{\theta \sim \Theta}[\max_a Q_\theta(s,a)]  
- \mathbb{E}_{s' \sim \mathcal{T}(s,\hat{a},\cdot), \theta \sim \Theta}[\max_a Q_\theta(s',a)]\rvert < \delta
\end{align*}
where the expectation is over both the random initialization of the $Q$-function weights, and the random transition to state $s' \sim \mathcal{T}(s,\hat{a},\cdot)$.
\end{definition}

\begin{definition}[\emph{Disadvantage Gap}]
\label{disadvantagedef}
For a state-action value function $Q_{\theta}$ the disadvantage gap in a state $s \in \mathcal{S}$ is given by $\mathcal{D}(s) = \mathbb{E}_{a \sim \mathcal{U}(\mathcal{A}),\theta \sim \Theta}[Q_{\theta}(s,a) - Q_{\theta}(s,a^{\textrm{min}})]$
where $a^{\textrm{min}} = \argmin_a Q_{\theta}(s,a)$.
\end{definition}
The following theorem captures the intuition that choosing counteractive actions, i.e. the action minimizing the state-action value function, will achieve an above-average temporal difference. 
\begin{theorem}[\emph{Counteractive Actions Increase Temporal Difference}]
\label{tdprop}
Let $\eta, \delta > 0$ and suppose that $Q_\theta(s,a)$ is $\eta$-uninformed and $\delta$-smooth.
Let $s_t \in \mathcal{S}$ be a state, and let $a^{\textrm{min}}$ be the action minimizing the state-action value in a given state $s_t$, $a^{\textrm{min}} = \argmin_a Q_\theta(s_t,a)$.
Let $s_{t+1}^{\textrm{min}}\sim \mathcal{T}(s_t,a^{\textrm{min}},\cdot)$.
Then for an action $a_t\sim \mathcal{U}(\mathcal{A})$ with $s_{t+1} \sim \mathcal{T}(s_t,a_t,\cdot)$ we have
\begin{align*}
&\mathbb{E}_{s_{t+1}^{\textrm{min}}\sim \mathcal{T}(s_t,a^{\textrm{min}},\cdot), \theta \sim \Theta}[ r(s_t,a^{\textrm{min}})  
+ \gamma  \max_a Q_\theta(s_{t+1}^{\textrm{min}},a) 
-   Q_\theta(s_t,a^{\textrm{min}})]  \\
&>\mathbb{E}_{a_t\sim\mathcal{U}, (\mathcal{A})s_{t+1} \sim \mathcal{T}(s_t,a_t,\cdot), \theta \sim \Theta}[ r(s_t,a_t) 
+ \gamma \max_a Q_\theta(s_{t+1},a) -  Q_\theta(s_t,a_t)] 
+ \mathcal{D}(s_t) - 2\delta - \eta
\end{align*}
\end{theorem}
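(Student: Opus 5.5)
The plan is to split each temporal difference into its three pieces (immediate reward, discounted next-state value, current $Q$-value) and compare the two sides term by term. Each comparison will use exactly one of the hypotheses. Write
\[
\mathrm{TD}^{\min} = \mathbb{E}[r(s_t,a^{\min})] + \gamma\,\mathbb{E}[\max_a Q_\theta(s^{\min}_{t+1},a)] - \mathbb{E}[Q_\theta(s_t,a^{\min})],
\]
and define $\mathrm{TD}^{\mathrm{unif}}$ analogously with $a_t\sim\mathcal{U}(\mathcal{A})$. By linearity of expectation, $\mathrm{TD}^{\min}-\mathrm{TD}^{\mathrm{unif}}$ is a sum of a reward difference, a discounted next-state value difference, and a current-value difference.

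\textbf{Reward terms.} Definition \ref{etauninformed} applied at $s_t$ gives $\mathbb{E}_\theta[r(s_t,a^{\min})] > \mathbb{E}_{a\sim\mathcal{U}(\mathcal{A})}[r(s_t,a)] - \eta$. Since $r$ does not depend on $\theta$, the right-hand side equals the reward expectation appearing in $\mathrm{TD}^{\mathrm{unif}}$.

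\textbf{Current-value terms.} By Definition \ref{disadvantagedef},
\[
-\mathbb{E}_\theta[Q_\theta(s_t,a^{\min})] = -\mathbb{E}_{a_t,\theta}[Q_\theta(s_t,a_t)] + \mathcal{D}(s_t),
\]
and this holds with equality. This is where the gain $\mathcal{D}(s_t)$ comes from.

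\textbf{Next-state terms.} Here I would use $\delta$-smoothness twice. First, apply Definition \ref{tderrdef} with $\hat a(s,\theta)=a^{\min}(s,\theta)$, which is allowed because $\hat a$ may depend on $\theta$. This gives $\mathbb{E}[\max_a Q_\theta(s^{\min}_{t+1},a)] > \mathbb{E}_\theta[\max_a Q_\theta(s_t,a)] - \delta$. Second, I need the uniform action: for each fixed $a_t$, apply the definition with $\hat a\equiv a_t$, then average over $a_t\sim\mathcal{U}(\mathcal{A})$. This gives $\mathbb{E}[\max_a Q_\theta(s_{t+1},a)] < \mathbb{E}_\theta[\max_a Q_\theta(s_t,a)] + \delta$. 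Chaining the two through the common anchor $\mathbb{E}_\theta[\max_a Q_\theta(s_t,a)]$ yields
\[
\mathbb{E}[\max_a Q_\theta(s^{\min}_{t+1},a)] > \mathbb{E}[\max_a Q_\theta(s_{t+1},a)] - 2\delta.
\]
Multiplying by $\gamma\in[0,1]$ turns the loss into $2\gamma\delta$, which is at most $2\delta$.

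\textbf{Combining.} Adding the three comparisons gives $\mathrm{TD}^{\min} > \mathrm{TD}^{\mathrm{unif}} + \mathcal{D}(s_t) - 2\delta - \eta$. The inequality is strict because the $\eta$ and $\delta$ bounds are strict.

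\textbf{Main obstacle.} The step I expect to need care is the second use of $\delta$-smoothness, where the action is averaged over the uniform distribution rather than being a single function $\hat a(s,\theta)$. The concern is whether averaging the per-action bounds is legitimate. It is: the absolute-value bound holds for every fixed $a_t$, hence also for its uniform average. The other point to handle is that the expectation over $\theta$ and the expectation over the transition are taken jointly in Definition \ref{tderrdef}. I would state explicitly that this joint expectation is the same quantity that appears in the statement of Theorem \ref{tdprop}, so the pieces match. The remaining steps are linearity of expectation together with $\gamma\le 1$.
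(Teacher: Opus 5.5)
Your proof is correct and follows essentially the same route as the paper: it chains $\delta$-smoothness twice through the anchor $\mathbb{E}_{\theta\sim\Theta}[\max_a Q_\theta(s_t,a)]$, rewrites $-\mathbb{E}_{\theta\sim\Theta}[Q_\theta(s_t,a^{\textrm{min}})]$ as the uniform-action term plus $\mathcal{D}(s_t)$ via Definition~\ref{disadvantagedef}, and closes with $\eta$-uninformedness for the reward term. You also make explicit two steps the paper leaves implicit: averaging over $a_t\sim\mathcal{U}(\mathcal{A})$ (applying smoothness with the constant choice $\hat a\equiv a_t$), and the bound $\gamma\delta\le\delta$.
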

\begin{proof}
Since $Q_{\theta}(s,a)$ is $\delta$-smooth we have
\begin{align*}
&\mathbb{E}_{s_{t+1}^{\emph{min}}\sim \mathcal{T}(s_t,a^{\emph{min}},\cdot), \theta \sim \Theta}
[ \gamma  \max_a Q_\theta(s_{t+1}^{\emph{min}},a) -  Q_\theta(s_t,a^{\emph{min}})] \\
& \qquad\qquad > \gamma \mathbb{E}_{\theta \sim \Theta}[\max_a Q_\theta(s_{t},a)] - \delta - \mathbb{E}_{\theta \sim \Theta}[Q_\theta(s_t,a^{\emph{min}})]\\
& \qquad\qquad > \gamma  \mathbb{E}_{s_{t+1} \sim \mathcal{T}(s_t,a_t,\cdot), \theta \sim \Theta}
[\max_a Q_\theta(s_{t+1},a)] - 2\delta 
 - \mathbb{E}_{\theta \sim \Theta}[Q_\theta(s_t,a^{\emph{min}})]\\
& \qquad\qquad \geq \mathbb{E}_{a_t \sim \mathcal{U}(\mathcal{A}), s_{t+1} \sim \mathcal{T}(s_t,a_t,\cdot), \theta \sim \Theta}[\gamma \max_a Q_\theta(s_{t+1},a) 
 - Q_\theta(s_t,a_{t})] + \mathcal{D}(s_t) - 2\delta
\end{align*}
where the last line follows from Definition \ref{disadvantagedef}.
Further, because $Q_{\theta}(s,a)$ is $\eta$-uninformed, $\mathbb{E}_{\theta \sim \Theta}[r(s_t,a^{\textrm{min}})] > \mathbb{E}_{a_t\sim \mathcal{U}(\mathcal{A})}[r(s_t,a_t)] - \eta.$
Combining with the previous inequality completes the proof.
\end{proof}
Theorem \ref{tdprop} shows that counteractive actions, i.e. actions that minimize the state-action value function, in fact increase temporal difference.
Now we will prove that counteractive actions achieve an increase in temporal difference further in the case where action selection and evaluation in the temporal difference are computed with two different sets of weights $\theta$ and $\hat{\theta}$ as in double $Q$-learning.
\begin{definition}[\emph{$\delta$-smoothness for Double-$Q$}]
Let $\delta > 0$. A pair of $Q$-functions parameterized by weights $\theta \sim \Theta$ and $\hat{\theta} \sim \Theta$ are $\delta$-smooth if for any state $s \in \mathcal{S}$ and action $\hat{a}=\hat{a}(s,\theta) \in \mathcal{A}$ with $s' \sim \mathcal{T}(s,\hat{a},\cdot)$, we have 
\begin{align*}
\Bigg\vert 
\underset{\substack{\theta, \hat{\theta} \sim \Theta \\ s' \sim \mathcal{T}(s,\hat{a},\cdot)}}{\mathbb{E}}
\left[ Q_{\hat{\theta}}(s, \argmax_a Q_{\theta}(s,a))\right] 
-\underset{\substack{\theta, \hat{\theta} \sim \Theta \\ s' \sim \mathcal{T}(s,\hat{a},\cdot)}}{\mathbb{E}}
 \left[Q_{\hat{\theta}}(s',\argmax_a Q_{\theta}(s',a))\right]\Bigg\vert < \delta
\end{align*}
where the expectation is over both the random initialization of the $Q$-function weights $\theta$ and $\hat{\theta}$, and the random transition to state $s' \sim \mathcal{T}(s,\hat{a},\cdot)$.
\end{definition}
Now we will prove that counteractive actions, i.e. actions that minimize the state-action value instead of maximizing, will lead to increase in temporal difference in the case of two $Q$-functions, i.e. $Q_{\theta}$ and $Q_{\hat{\theta}}$, that are alternatively used to take an action and evaluate the value of the action.
\begin{theorem}
\label{doubleqprop}
Let $\eta,\delta > 0$ and suppose that $Q_{\theta}$ and $Q_{\hat{\theta}}$ are $\eta$-uniformed and $\delta$-smooth.
Let $s_t \in \mathcal{S}$ be a state, and let $a^{\textrm{min}} = \argmin_a Q_{\theta}(s_t,a)$.
Let $s_{t+1}^{\textrm{min}}\sim \mathcal{T}(s_t,a^{\textrm{min}},\cdot)$.
Then for an action $a_t\sim \mathcal{U}(\mathcal{A})$ with $s_{t+1} \sim \mathcal{T}(s_t,a_t,\cdot)$ we have
\begin{align*}
&\mathbb{E}_{s_{t+1} \sim \mathcal{T}(s,a,\cdot), \theta \sim \Theta, \hat{\theta} \sim \Theta}[ r(s_t,a^{\textrm{min}}) 
+ \gamma Q_{\hat{\theta}}(s_{t+1}^{\textrm{min}},\argmax_a Q_{\theta}(s_{t+1}^{\textrm{min}},a)) -  Q_{\theta}(s_t,a^{\textrm{min}})] \\
& \quad> \mathbb{E}_{a_t \sim \mathcal{U}(\mathcal{A}), s_{t+1} \sim \mathcal{T}(s,a,\cdot), \theta \sim \Theta, \hat{\theta} \sim \Theta}[ r(s_t,a_t)
+ \gamma Q_{\hat{\theta}}(s_{t+1},\argmax_a Q_{\theta}(s_{t+1},a)) -  Q_{\theta}(s_t,a_t)] \\
&\quad\quad\quad +\mathcal{D}(s_t) - 2\delta -\eta
\end{align*}
\end{theorem}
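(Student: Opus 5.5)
The plan is to mirror the proof of Theorem~\ref{tdprop} line by line. The only change is that the bootstrap term $\max_a Q_\theta(s',a)$ is replaced by the double-$Q$ target $Q_{\hat{\theta}}(s',\argmax_a Q_{\theta}(s',a))$, and $\delta$-smoothness for Double-$Q$ is used in place of Definition~\ref{tderrdef}. Write $G(s) = Q_{\hat{\theta}}(s,\argmax_a Q_{\theta}(s,a))$ for brevity. We split the temporal difference into three pieces: the reward, the bootstrap term $\gamma G(s_{t+1})$, and the current estimate $-Q_\theta(s_t,\cdot)$. Each piece is bounded separately, and the bounds are then added.

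\textbf{Bootstrap term.} Apply double-$Q$ $\delta$-smoothness with $\hat{a} = a^{\textrm{min}}$. This is legitimate because $a^{\textrm{min}} = a^{\textrm{min}}(s_t,\theta)$ depends only on the state and on $\theta$, which is exactly the form $\hat{a}(s,\theta)$ the definition permits. It gives
\[
\mathbb{E}[G(s_{t+1}^{\textrm{min}})] > \mathbb{E}[G(s_t)] - \delta .
\]
Next apply the same definition with $\hat{a} = a_t$, where $a_t$ is the uniformly random action, to get $\mathbb{E}[G(s_t)] > \mathbb{E}[G(s_{t+1})] - \delta$. Some care is needed here: $a_t$ is independent of $\theta$, so we either treat it as an action with trivial dependence on $\theta$ or apply the bound for each fixed $a$ and average over $a\sim\mathcal{U}(\mathcal{A})$. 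Chaining the two inequalities yields
\[
\gamma\,\mathbb{E}[G(s_{t+1}^{\textrm{min}})] > \gamma\,\mathbb{E}[G(s_{t+1})] - 2\gamma\delta \geq \gamma\,\mathbb{E}[G(s_{t+1})] - 2\delta ,
\]
using $\gamma\le 1$. (The proof of Theorem~\ref{tdprop} silently absorbs $\gamma$ in the same way.)

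\textbf{Current-estimate term.} By Definition~\ref{disadvantagedef}, applied to $Q_\theta$, which is the network used to select $a^{\textrm{min}}$ in the statement, we have
\[
-\mathbb{E}_\theta[Q_\theta(s_t,a^{\textrm{min}})] = -\mathbb{E}_{a_t,\theta}[Q_\theta(s_t,a_t)] + \mathcal{D}(s_t).
\]
\textbf{Reward term.} Since $Q_\theta$ is $\eta$-uninformed (Definition~\ref{etauninformed}),
\[
\mathbb{E}_\theta[r(s_t,a^{\textrm{min}})] > \mathbb{E}_{a_t\sim\mathcal{U}(\mathcal{A})}[r(s_t,a_t)] - \eta .
\]
Summing the three bounds and using linearity of expectation over $(a_t, s_{t+1}, \theta, \hat{\theta})$ gives exactly the claimed inequality with slack $\mathcal{D}(s_t) - 2\delta - \eta$.

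The main obstacle is the bootstrap step. One must check that the double-$Q$ smoothness definition really applies in both directions needed, namely from $s_t$ to $s_{t+1}^{\textrm{min}}$ and from $s_t$ to the random successor $s_{t+1}$. One must also check that the expectations over $\theta$ and $\hat{\theta}$ are joint and consistent across the chained inequalities. In particular, $a^{\textrm{min}}$ correlates with $\theta$, and this correlation is why the definition explicitly allows $\hat{a}$ to depend on $\theta$. If that is accepted, the rest is bookkeeping identical to Theorem~\ref{tdprop}.
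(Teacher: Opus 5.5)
Your proposal is correct and follows essentially the same route as the paper: two applications of double-$Q$ $\delta$-smoothness (from $s_{t+1}^{\textrm{min}}$ back to $s_t$, then from $s_t$ to the random successor $s_{t+1}$), the disadvantage gap to exchange $Q_\theta(s_t,a^{\textrm{min}})$ for $Q_\theta(s_t,a_t)$, and $\eta$-uninformedness of $Q_\theta$ for the reward term. Two steps the paper leaves implicit are made precise in your version: absorbing the factor $\gamma\le 1$ into $2\delta$, and handling the uniformly random $a_t$ by applying the definition for each fixed action and then averaging.
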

\begin{proof}
Since $Q_{\theta}$ and $Q_{\hat{\theta}}$ are $\delta$-smooth we have
\begin{align*}
&\mathbb{E}_{s_{t+1}^{\emph{min}} \sim \mathcal{T}(s_t,a^{\emph{min}},\cdot), \theta \sim \Theta, \hat{\theta} \sim \Theta}[ 
+ \gamma Q_{\hat{\theta}}(s_{t+1}^{\emph{min}},
 \argmax_a Q_{\theta}(s_{t+1}^{\emph{min}},a)) -  Q_{\theta}(s_t,a^{\emph{min}})]  \\
& \quad >\mathbb{E}_{s_{t+1}^{\emph{min}} \sim \mathcal{T}(s_t,a^{\emph{min}},\cdot), \theta \sim \Theta, \hat{\theta} \sim \Theta}[ 
+ \gamma Q_{\hat{\theta}}(s_{t}, 
\argmax_a Q_{\theta}(s_{t},a)) -  Q_{\theta}(s_t,a^{\emph{min}})] - \delta \\
& \quad > \mathbb{E}_{s_{t+1} \sim \mathcal{T}(s_t,a_t,\cdot), \theta \sim \Theta, \hat{\theta} \sim \Theta}[   
+ \gamma Q_{\hat{\theta}}(s_{t+1}, 
 \argmax_a Q_{\theta}(s_{t+1},a)) -  Q_{\theta}(s_t,a^{\emph{min}})] - 2\delta\\
& \quad \geq \mathbb{E}_{s_{t+1} \sim \mathcal{T}(s_t,a_t,\cdot), \theta \sim \Theta, \hat{\theta} \sim \Theta}[  
+ \gamma Q_{\hat{\theta}}(s_{t+1},
 \argmax_a Q_{\theta}(s_{t+1},a)) -  Q_{\theta}(s_t,a_t)] + \mathcal{D}(s_t) -2\delta
\end{align*}
where the last line follows from Definition \ref{disadvantagedef}.
Further, because $Q_{\theta}$ and $Q_{\hat{\theta}}$  are $\eta$-uniformed, $\mathbb{E}_{\theta \sim \Theta, \hat{\theta}\sim\Theta}[r(s_t,a^{\textrm{min}})] > \mathbb{E}_{a_t\sim \mathcal{U}(\mathcal{A})}[r(s_t,a_t)] - \eta$. 
Combining with the previous inequality completes the proof.
\end{proof}
\vskip-0.15in
\textbf{Core Counterintuition:}  
\hskip1in
\begin{GrayBox}
\emph{\hskip0.4in How could minimizing the state-action value function accelerate learning?}
\end{GrayBox}
\vskip 0.04in
At first, the results in Theorem \ref{tdprop} and \ref{doubleqprop} might appear counterintuitive. Yet, understanding this counterintuitive fact relies on first understanding the intrinsic difference between the randomly initialized
state-action value function, i.e. $Q_{\theta}$, and the optimal state-action value function, i.e. $Q^*$. In particular, from the perspective of the function $Q^*$, the action $a_{Q_{\theta}}^{\text{min}}(s) = \argmin_a Q_{\theta}(s,a)$ is a uniform random action. However, from the perspective of the function $Q_{\theta}$, the action $a^{\text{min}}$ is meaningful, in that it will lead to a higher TD-error update than any other action; hence the realization of the intrinsic difference between $a^{\text{min}}_{Q_{\theta}}(s)$ and $a^{\text{min}}_{Q^*}(s)$ with regard to $Q_{\theta}$ and $Q^*$ provides a valuable insight on how counteractive actions do in fact increase temporal difference.
In fact, Theorem \ref{tdprop} and \ref{doubleqprop} precisely provide the formalization that the temporal difference achieved by taking the minimum action is larger than that of a random action by an amount equal to the disadvantage gap $\mathcal{D}(s)$.
Experimental results reported in Section \ref{largeexp} further verify the theoretical analysis. 
Now we will formalize this intuition for initialization and prove that the distribution of the minimum value action in a given state is uniform by itself, but is constant once it is conditioned on the weights $\theta$.
\begin{proposition}[\emph{Marginal and Conditional Distribution of Counteractive Actions}]
    \label{initialweight}
Let $\theta$ be the random initial weights for the $Q$-function. For any state $s\in \mathcal{S}$ let $a^{\textrm{min}}(s) = \argmin_{a' \in \mathcal{A}}Q_{\theta}(s,a')$. Then for any $a \in \mathcal{A}$, $\mathbb{P}_{\theta \sim \Theta}\left[\argmin_{a' \in \mathcal{A}}Q_{\theta}(s,a')=a\right] = \frac{1}{\lvert \mathcal{A}\rvert}$
i.e. the distribution $\mathbb{P}_{\theta  \sim \Theta}[a^{\textrm{min}}(s)]$ is uniform.
Simultaneously, the conditional distribution $\mathbb{P}_{\theta \sim \Theta}[a^{\textrm{min}}(s)\mid \theta]$ is constant.
\end{proposition}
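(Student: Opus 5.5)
The plan is to argue the marginal statement by symmetry of the initialization distribution, and the conditional statement by determinism of $Q_\theta$ given $\theta$.

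\textbf{Symmetry assumption.} For the marginal statement we make explicit an assumption implicit in standard deep $Q$-network initialization. The output layer assigns each action $a$ its own weight vector and bias, drawn i.i.d. from the same distribution. Consequently, for any permutation $\sigma$ of $\mathcal{A}$, the reparametrized weights $\theta^\sigma$ (the output-layer rows permuted by $\sigma$) satisfy $\theta^\sigma \overset{d}{=} \theta$. Moreover $Q_{\theta^\sigma}(s,a) = Q_\theta(s,\sigma(a))$ for every $s$ and $a$.

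\textbf{Exchangeability.} The symmetry gives that the random vector $\big(Q_\theta(s,a)\big)_{a\in\mathcal{A}}$ is exchangeable. Concretely, the relabeled vector $\big(Q_\theta(s,\sigma(a))\big)_{a}$ equals $\big(Q_{\theta^\sigma}(s,a)\big)_{a}$, which has the same law as the original vector.

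\textbf{Ties.} We fix a deterministic tie-breaking rule and show that ties have probability zero. The output-layer biases are continuous random variables, independent of everything else. Hence for $a\neq a'$ the difference $Q_\theta(s,a)-Q_\theta(s,a')$ has a continuous conditional law given the remaining weights, so $\mathbb{P}[Q_\theta(s,a)=Q_\theta(s,a')]=0$. A union bound over the finitely many pairs then shows the argmin is almost surely unique, so the tie-breaking convention is irrelevant.

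\textbf{Uniformity.} Combining exchangeability with almost-sure uniqueness gives
\[
\mathbb{P}_\theta\big[a^{\textrm{min}}(s)=a\big]=\mathbb{P}_\theta\big[a^{\textrm{min}}(s)=\sigma(a)\big]
\]
for all $\sigma$. The events $\{a^{\textrm{min}}(s)=a\}_{a\in\mathcal{A}}$ partition a probability-one set, so the $|\mathcal{A}|$ probabilities are all equal and sum to one. Each therefore equals $1/|\mathcal{A}|$.

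\textbf{Conditional statement.} Conditioning on $\theta$ makes $Q_\theta(s,\cdot)$ a fixed function, so $a^{\textrm{min}}(s)$ is a deterministic function of $(s,\theta)$. Thus $\mathbb{P}[a^{\textrm{min}}(s)=a\mid\theta]=\mathbb{1}\{a=\argmin_{a'}Q_\theta(s,a')\}$, a point mass. In particular it is constant: repeated visits to $s$ under fixed weights always yield the same action. This is the sense of ``constant'' intended, in contrast with the uniform marginal.

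\textbf{Main obstacle.} The hardest step is justifying the symmetry assumption, namely that the initialization law $\Theta$ is invariant under permuting the action heads. This is not stated in the paper and must be imposed as a hypothesis on $\Theta$. It holds for i.i.d. initializations such as Gaussian, Xavier or He on the final layer, but it could fail for, say, action-dependent biases. I would state it explicitly and also note the continuity requirement, handled via the bias term, that rules out ties.
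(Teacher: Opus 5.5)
Your proof is correct and takes essentially the same route as the paper. The marginal is uniform because the random initialization is symmetric under relabeling the actions. The conditional law is a point mass because $a^{\textrm{min}}(s)$ is a deterministic function of $(s,\theta)$. Your explicit permutation-invariance hypothesis on $\Theta$ and your almost-sure uniqueness of the argmin make precise what the statement leaves implicit; note that the hypothesis needed is exchangeability of $(Q_\theta(s,a))_{a\in\mathcal{A}}$, not just identical marginals.
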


\begin{algorithm}[t]
\caption{CoAct TD Learning: Counteractive Temporal Difference Learning}\label{alg:exp}
\begin{algorithmic}
\STATE{\textbf{Input:} In MDP $\mathcal{M}$ with $\gamma \in (0,1]$, $s \in \mathcal{S}$, $a \in \mathcal{A}$ with $Q_\theta(s,a)$ function parametrized by $\theta$, $\epsilon$ dithering parameter, $\mathcal{B}$ experience replay buffer, $\mathcal{N}$ is the training learning steps.}
\end{algorithmic}
\vspace{-0.5cm} 
\begin{multicols}{2}
\vspace{-1cm} 
\begin{algorithmic}
\vskip-0.4in
\vskip-0.4in
\hsize=0.4\textwidth\linewidth=\hsize\columnwidth=\hsize
\STATE{  Populating Experience Replay Buffer:}
\FOR{$s_t$ in $e$ }
\STATE{Sample $\kappa \sim U(0,1)$}
\IF{$\kappa < \epsilon$}
\STATE{$a^{\textit{min}} = \argmin_a Q(s_t,a)$}
\STATE{$s^{\textit{min}}_{t+1} \sim \mathcal{T}(s_t,a^{\textit{min}},\cdot)$}
\STATE{$\mathcal{B} \leftarrow (r(s_t, a^{\textit{min}}), s_t, s^{\textit{min}}_{t+1}, a^{\textit{min}})$}
\ELSE
\STATE{$a^{\textrm{max}} = \argmax_a Q(s_t,a)$}
\STATE{$s_{t+1} \sim \mathcal{T}(s_t,a^{\textrm{max}},\cdot)$}
\STATE{$\mathcal{B} \leftarrow (r(s_t, a^\textrm{max}), s_t, s_{t+1}, a^{\textrm{max}})$}
\ENDIF
\ENDFOR
\end{algorithmic}
\columnbreak
\vspace{-1cm} 
\begin{algorithmic}
\STATE{Learning:}
\FOR{ $n$ in $\mathcal{N}$ }
\STATE{\hspace{-0.09cm}Sample from replay buffer:} 
\STATE{\hspace{-0.09cm}$\langle s_t, a_t, r(s_t,a_t), s_{t+1}  \rangle \sim \mathcal{B} $} 
\STATE{\hspace{-0.09cm}Thus, $\mathcal{TD}$ receives update with probability $\epsilon$}
\STATE{\vspace{-0.3cm}\hspace{-0.09cm}$\mathcal{TD} = r(s_t,a^{\textit{min}})+ \gamma \max_a Q(s^{\textit{min}}_{t+1}, a)- Q(s_t,a^{\textit{min}}) $}
\STATE{\hspace{-0.09cm}$\mathcal{TD}$ receives update with probability $1-\epsilon$:}
\STATE{ $\mathcal{TD} = r(s_t,a^{\textrm{max}})+ \gamma \max_a Q(s_{t+1}, a)- Q(s_t,a^{\textrm{max}}) $}
\STATE Gradient step with $\nabla \mathcal{L}(\mathcal{TD})$
\hspace{-1cm} \ENDFOR
\end{algorithmic}
\end{multicols}
\vspace{-0.4cm} 
\end{algorithm}

\vskip-0.1in

The proof is provided in the supplementary material. 
Proposition \ref{initialweight} shows that in states in which the $Q$-function has not received sufficient updates, taking the minimum action is almost equivalent to taking a random action with respect to its contribution to the rewards obtained.
However, while the action chosen early on in training is almost uniformly random when only considering the current state, 
it is at the same time still completely determined by the current value of the weights $\theta$, 
as is the temporal difference.
Thus while the marginal distribution on actions taken is uniform, the temporal difference when taking counteractive actions, i.e. the minimum action, is quite different than from the case where an independently random action is chosen.
In particular, in expectation over the random initialization $\theta \sim \Theta$, the temporal difference is higher when taking the minimum value action than that of a random action as demonstrated in Section \ref{worstq}.

The main objective of our approach is to increase the information gained from each environment interaction,
and we show that this can be achieved via actions that minimize the state-action value function.
While minimization of the $Q$-function may initially be regarded as counterintuitive, Section \ref{worstq} provides the exact theoretical justification on how taking actions that minimize the state-action value function results in higher temporal difference for the corresponding state transitions, and Section \ref{largeexp} provides experimental results that verify the theoretical analysis.
Our method is a fundamental theoretically well-motivated improvement on temporal difference learning. Thus, any algorithm in reinforcement learning that is built upon temporal difference learning can be simply switched to CoAct TD learning.
Algorithm \ref{alg:exp} summarizes our proposed algorithm CoAct TD Learning based on minimizing the state-action value function as described in detail in Section \ref{worstq}.
Note that populating the experience replay buffer and learning are happening simultaneously with different rates. TD receives an update with probability $\epsilon$ solely due to the experience collection. 
\vskip -0.2in
\begin{GrayBox}
\emph{\hskip0.08in CoAct TD is modular: It is a plug-and-play method with any canonical baseline algorithm.}
\end{GrayBox}
\vskip -0.3in
\section{Motivating Example}
\label{motivatingexample}
To truly understand the intuition behind our counterintuitive foundational method, we consider a motivating example: the chain MDP.
In particular, the chain MDP which consists of a chain of $n$ states $s \in \mathcal{S}=\{1,2,\cdots n\}$ each with four actions. Each state $i$ has one action that transitions the agent up the chain by one step to state $i+1$, one action that transitions the agent to state $2$, one action that transitions the agent to state $3$, and one action which resets the agent to state $1$ at the beginning of the chain. All transitions have reward zero, except for the last transition returning the agent to the beginning from the $n$-th state.
Thus, when started from the first state in the chain, the agent must learn a policy that takes $n-2$ consecutive steps up the chain, and then one final step to reset and get the reward.
For the chain MDP, we compare standard approaches in temporal difference learning in tabular $Q$-learning with our method CoAct TD Learning based on minimization of the state-action values. In particular we compare our method CoAct TD Learning with both the $\epsilon$-greedy action selection method, and the upper confidence bound (UCB) method. 
In more detail, in the UCB method the number of training steps $t$, and the number of times $N_t(s,a)$ that each action $a$ has been taken in state $s$ by step $t$ are recorded. Furthermore, the action $a\in \mathcal{A}$ selection is determined as follows:
\vskip-0.18in
\[
a^{\textrm{UCB}} = \argmax_{a \in \mathcal{A}} Q(s,a) + 2\sqrt{\frac{\log t}{N_t(s,a)}}.
\]
In a given state $s$ if $N(s,a) = 0$ for any action $a$, then an action is sampled uniformly at random from the set of actions $a'$ with $N(s,a')=0$.
For the experiments reported in our paper the length of the chain is set to $n=10$.
The $Q$-function is initialized by independently sampling each state-action value from a normal distribution with $\mu=0$ and $\sigma=0.1$.
\begin{figure*}[t]
\footnotesize
\vskip -0.16in
\begin{center}
\stackunder[3pt]{\includegraphics[scale=0.264]{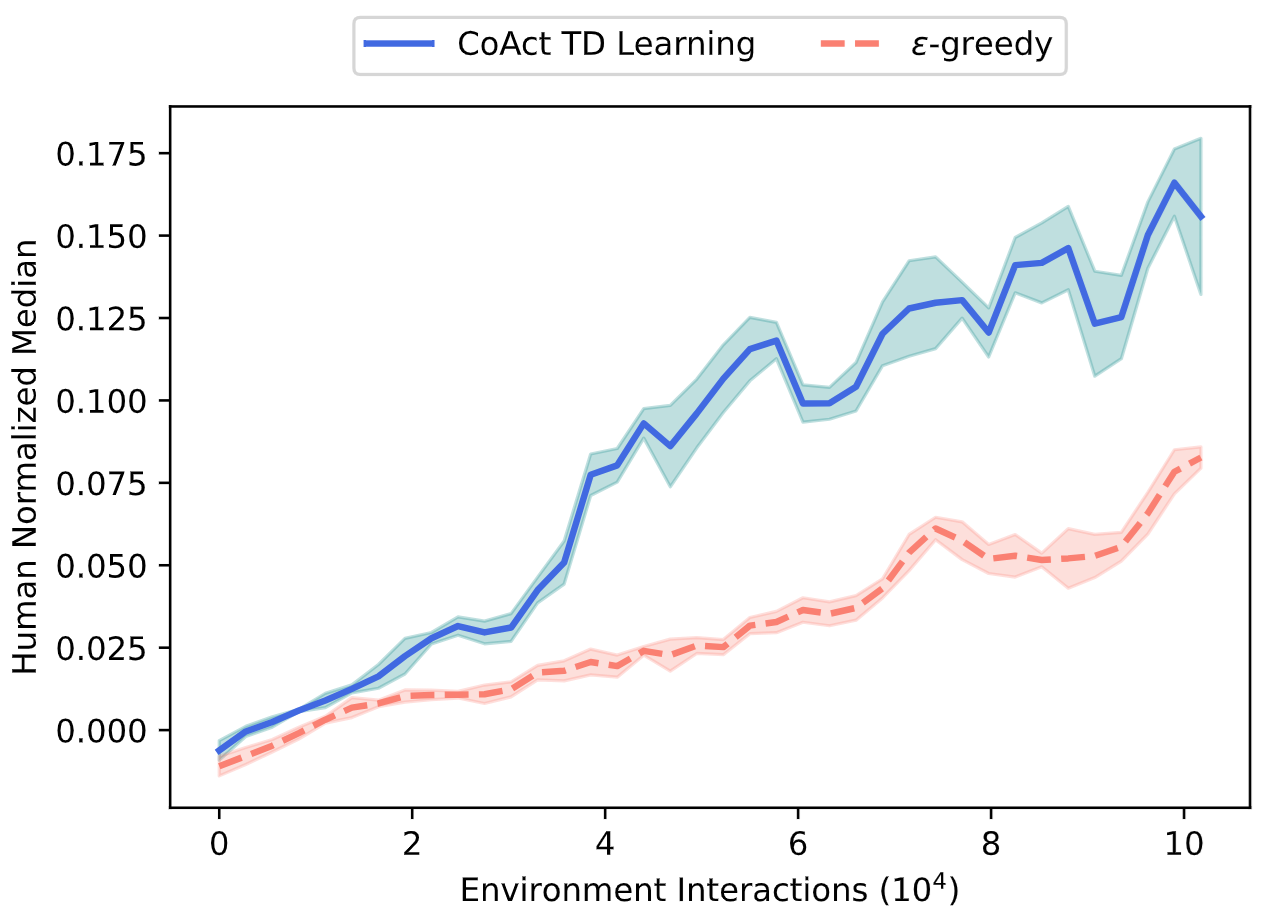}}{}
\stackunder[3pt]{\includegraphics[scale=0.26]{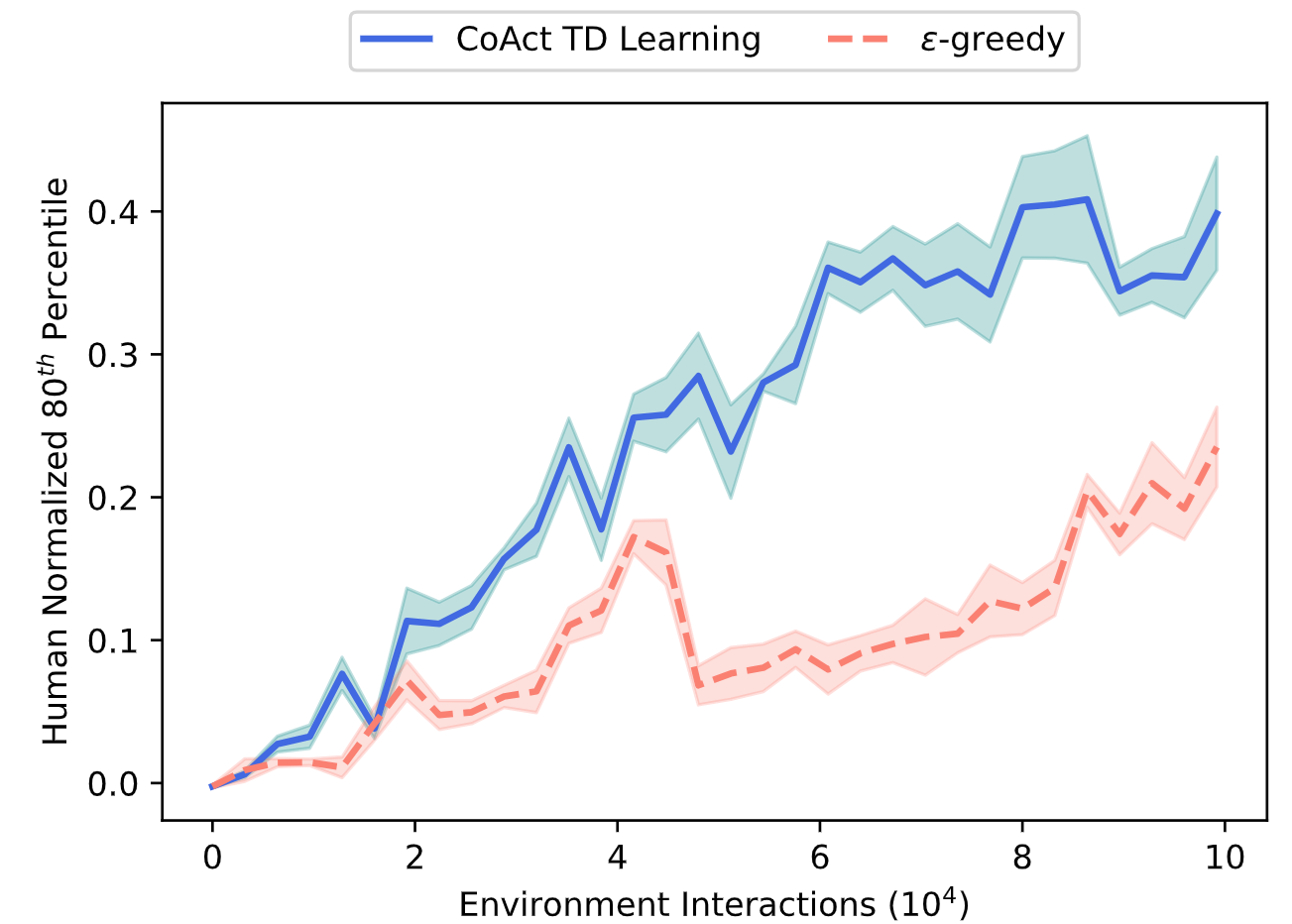}}{}
\end{center}
\vskip -0.22in
\caption{Human normalized scores median and 80$^{\textrm{th}}$ percentile over all games in the Arcade Learning Environment (ALE) 100K benchmark for CoAct TD Learning and the canonical temporal difference learning with $\epsilon$-greedy for QRDQN. Left: Median. Right: 80$^{\textrm{th}}$ Percentile.}
\label{qrdqn}
\vskip -0.18in
\end{figure*}
\begin{figure}[t!]
\centering
\stackunder[1pt]{\includegraphics[scale=0.155]{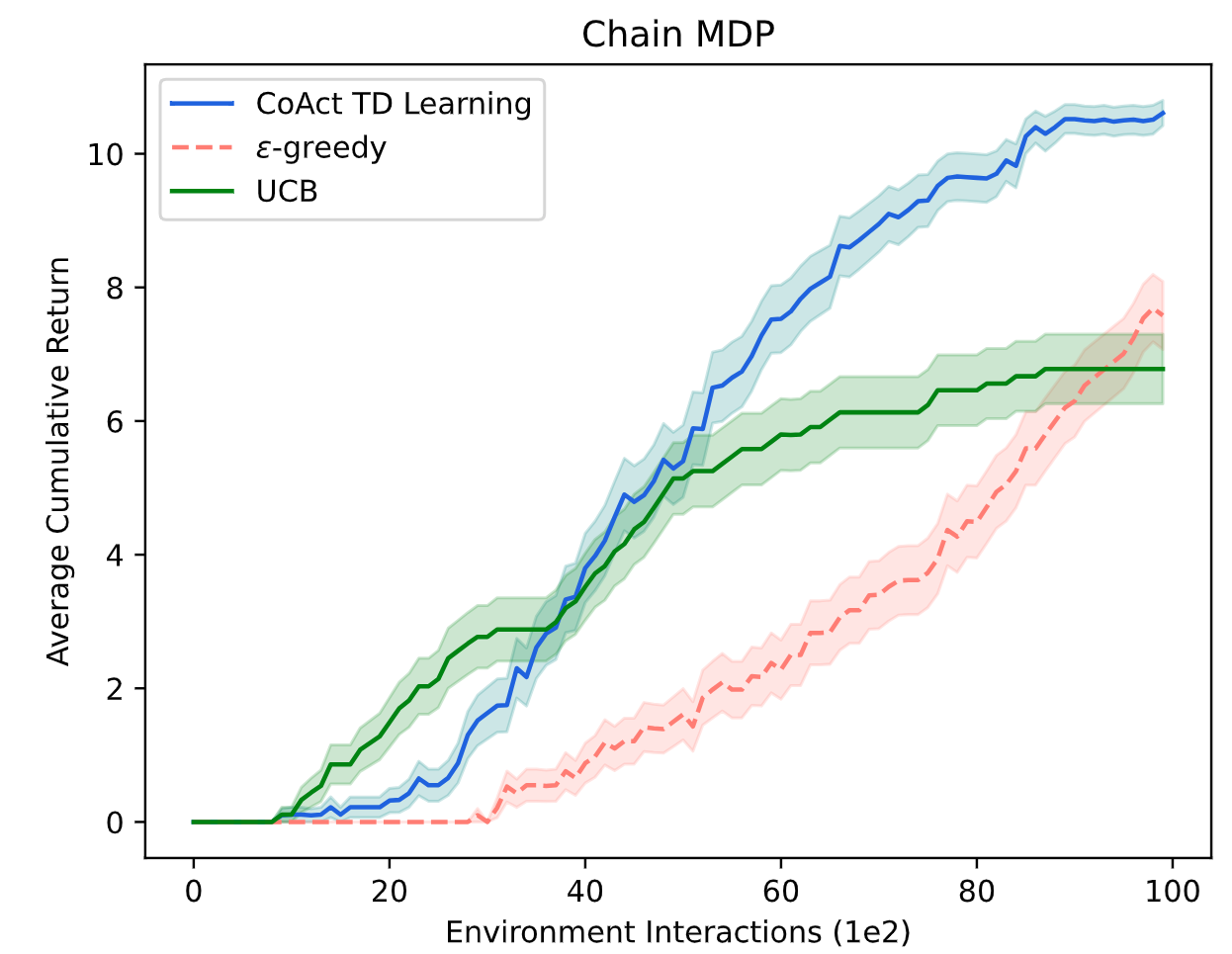}}{}
\stackunder[1pt]{\includegraphics[scale=0.155]{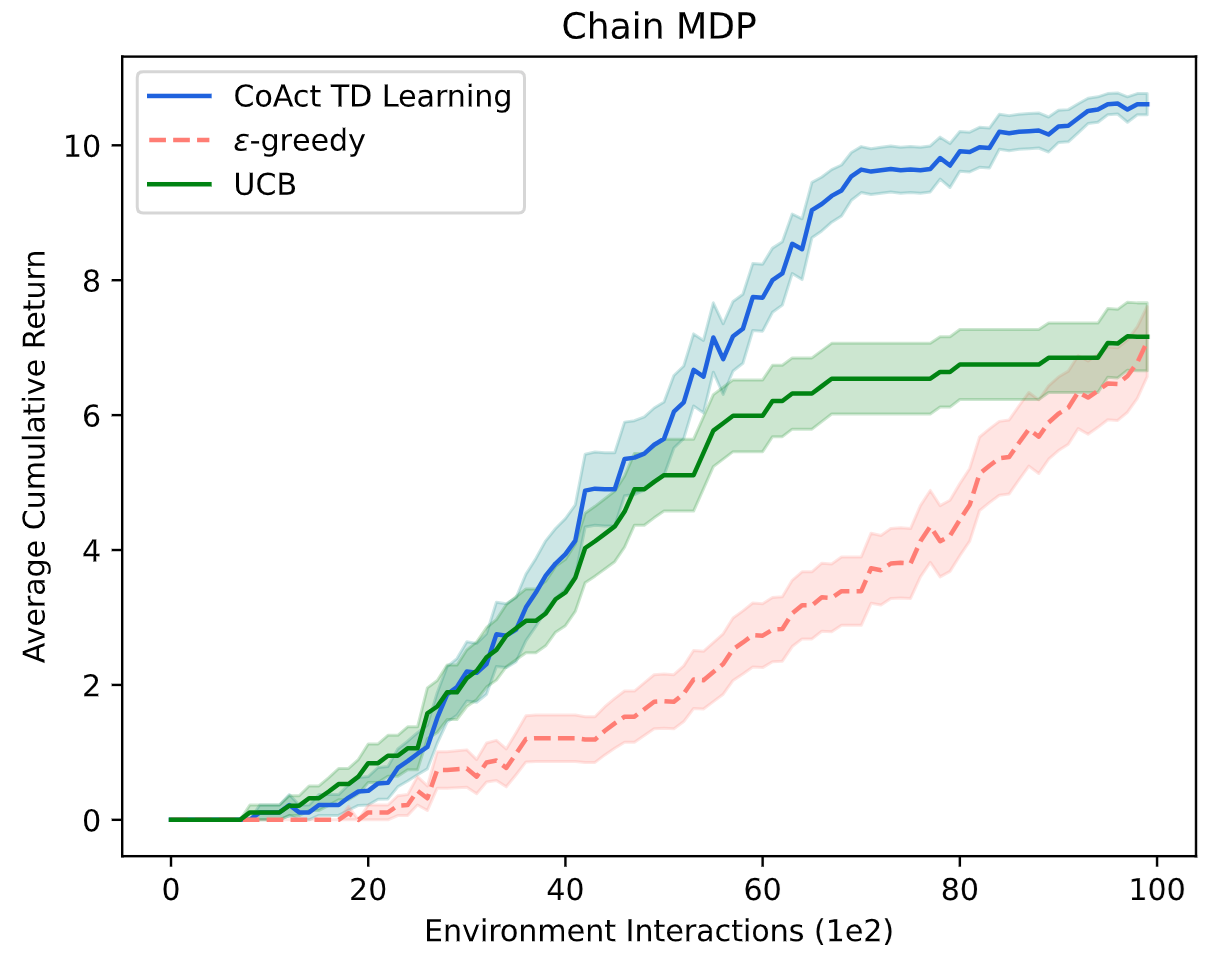}}{} 
\stackunder[1pt]{\includegraphics[scale=0.155]{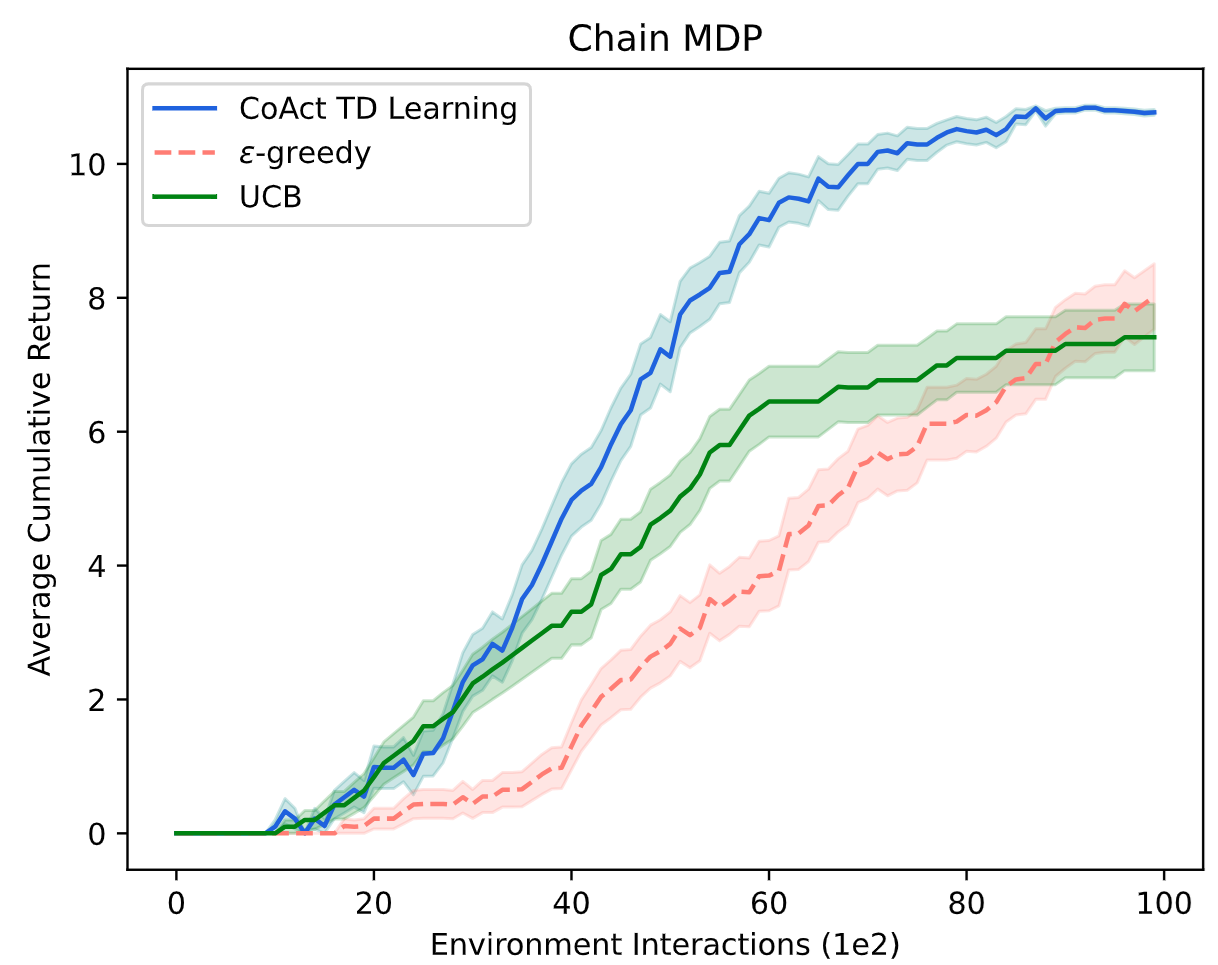}}{}
\stackunder[1pt]{\includegraphics[scale=0.155]{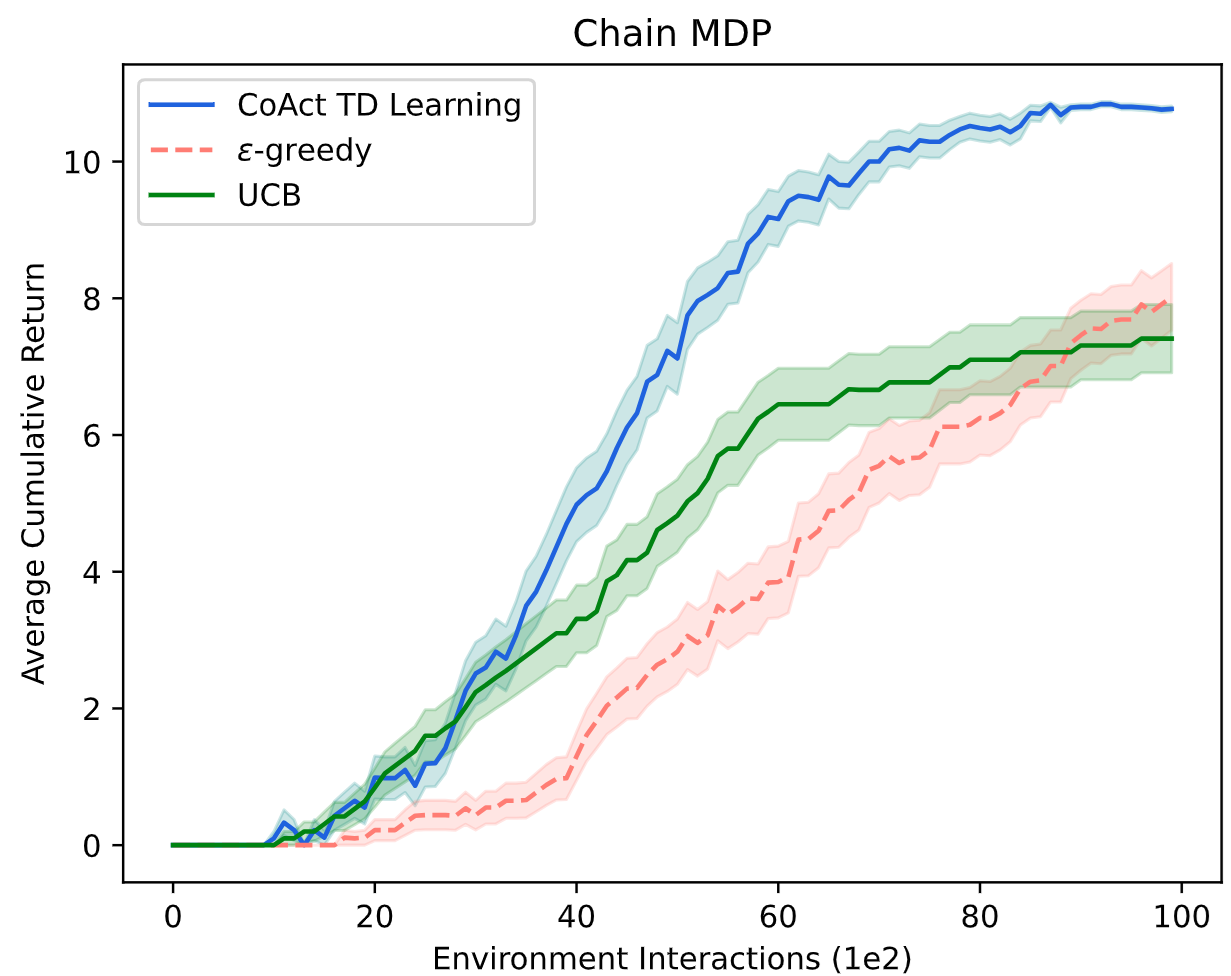}}{}
\vskip -0.1in
\caption{Learning curves in the chain MDP with our proposed algorithm CoAct TD Learning, the canonical algorithm $\epsilon$-greedy and the UCB algorithm with variations in $\epsilon$.}
\label{chainmdp}
\vskip -0.14in
\end{figure}
In each iteration we train the agent using $Q$-learning for 100 steps, and then evaluate the reward obtained by the argmax policy using the current $Q$-function for 100 steps.
Note that the maximum achievable reward in 100 steps is 11.
Figure \ref{chainmdp} reports the learning curves for each method with varying $\epsilon\in[0.15,0.25]$ with step size $0.025$.
The results in Figure \ref{chainmdp} demonstrate that our method converges faster to the optimal policy than either of the standard approaches.

\section{Experimental Results}

\label{largeexp}

The experiments are conducted in the Arcade Learning Environment (ALE). We conduct empirical analysis with multiple baseline algorithms including Deep Double-Q Network \citep{hado16} initially proposed in \citep{hasselt10} trained with prioritized experience replay \citep{tom16} without the dueling architecture with its original version \citep{hado16}, and the QRDQN algorithm that is also described in Section \ref{exploration}.
The experiments are conducted both in the 100K Arcade Learning Environment benchmark, and the canonical version with 200 million frame training \citep{mn15,wang16}. Note that the 100K Arcade Learning Environment benchmark is an established baseline proposed to measure sample efficiency in deep reinforcement learning research, and contains 26 different Arcade Learning Environment games. The policies are evaluated after 100000 environment interactions. All of the polices in the experiments are trained over 5 random seeds. The hyperparameters, the architecture details, and additional experimental results are reported in the supplementary material. All of the results in the paper are reported with the standard error of the mean.
The human normalized scores are computed as, $\textrm{HN} = (\textrm{Score}_{\textit{agent}} - \textrm{Score}_{\textit{random}})/(\textrm{Score}_{\textit{human}} - \textrm{Score}_{\textit{random}})$.
\begin{figure*}[t]
    \footnotesize
    \begin{center}
    \vskip -0.16in
    \stackunder[3pt]{\includegraphics[scale=0.124]{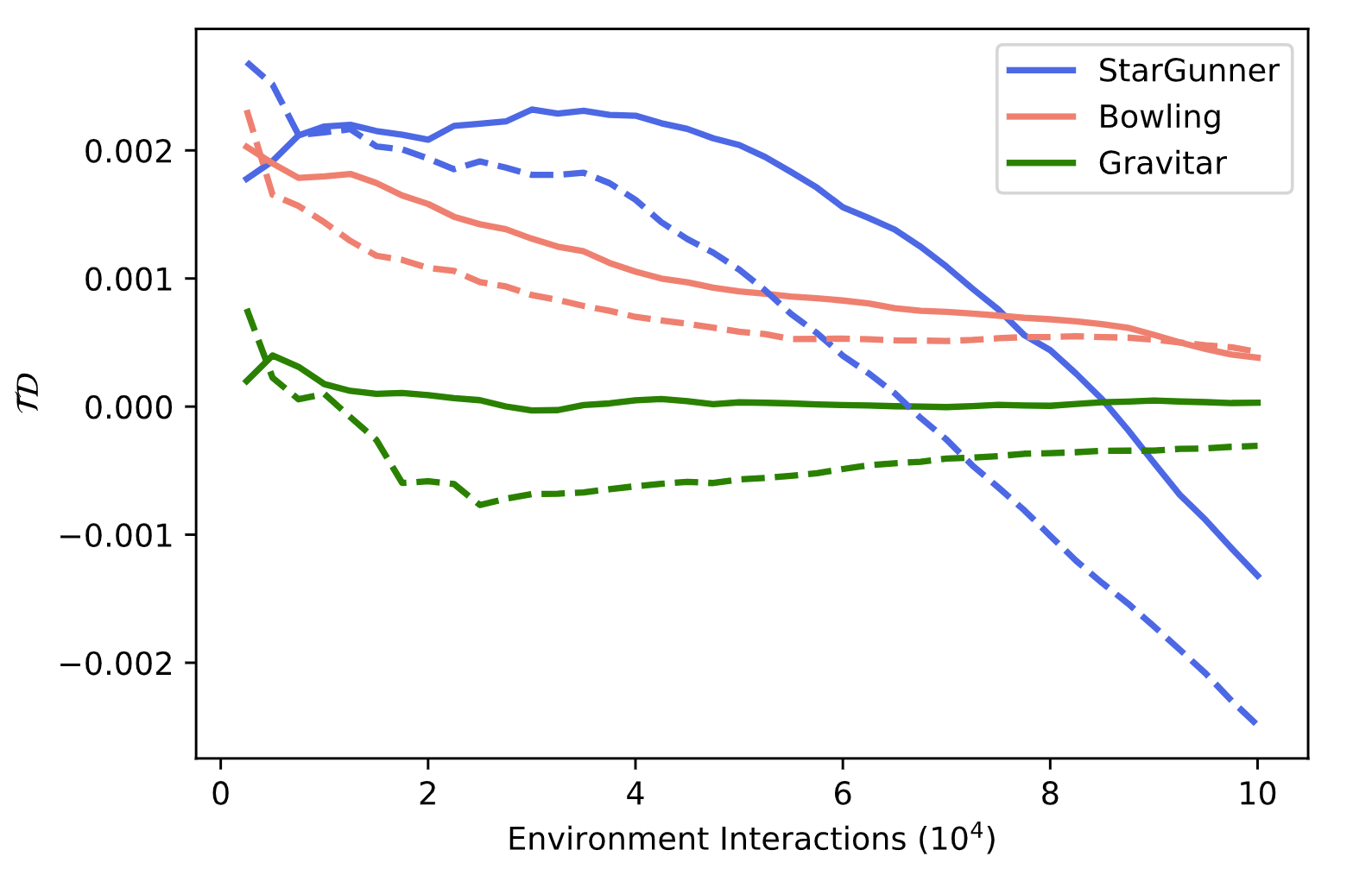}}{\scriptsize{}}
    \hskip-0.05in
    \stackunder[3pt]{\includegraphics[scale=0.124]{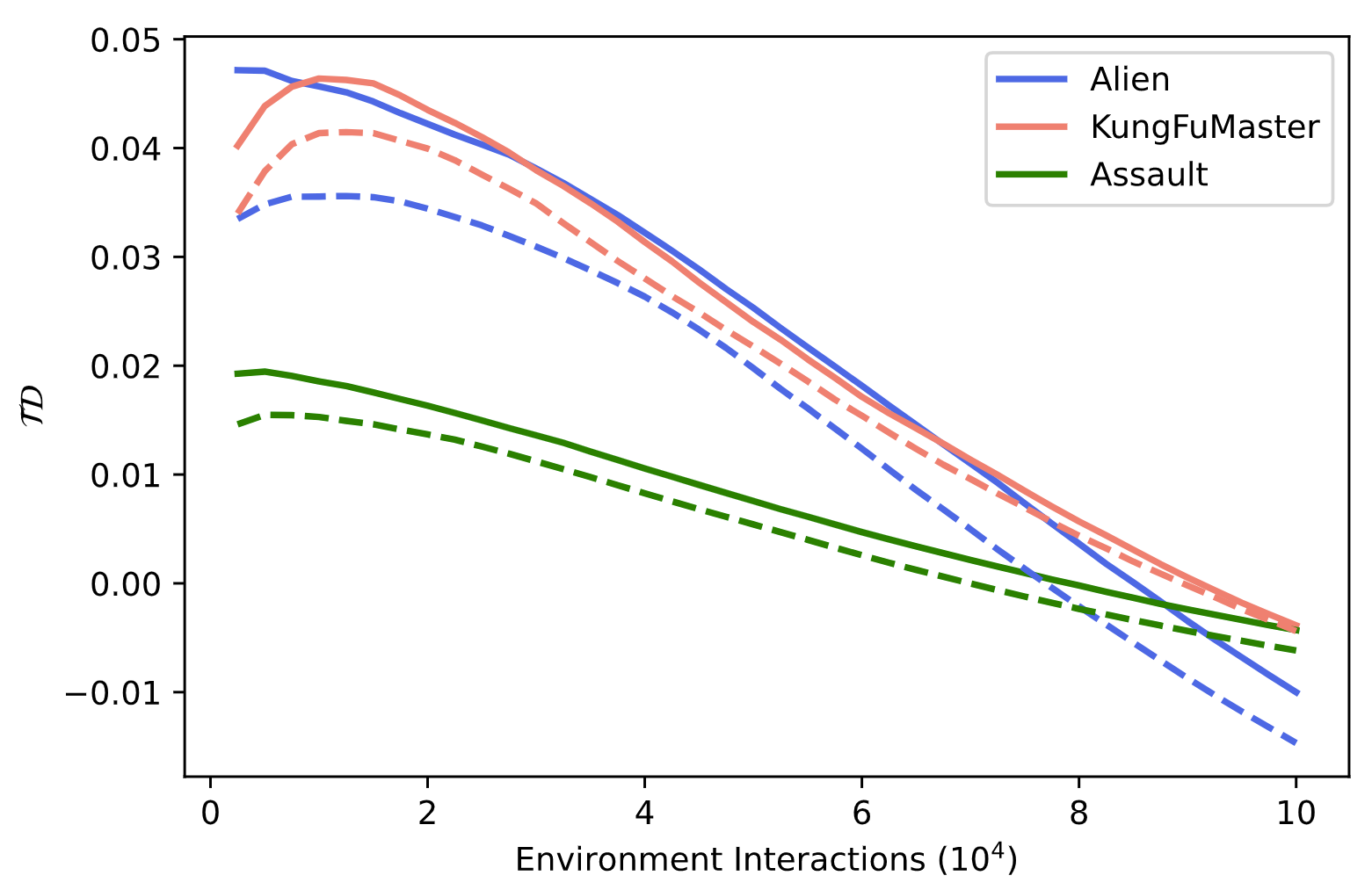}}{\scriptsize{}}
    \stackunder[3pt]{\includegraphics[scale=0.124]{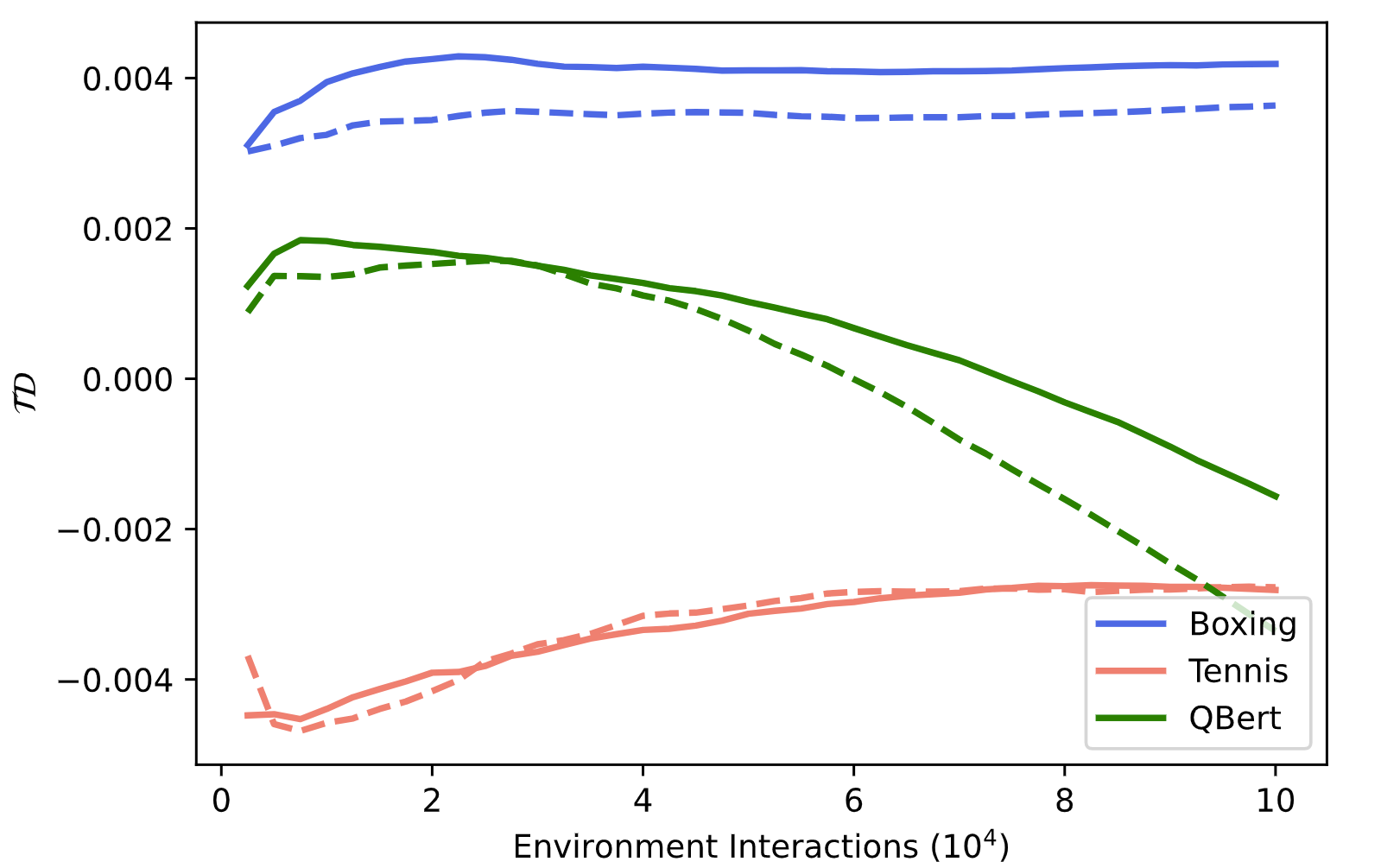}}{\scriptsize{}}
    \stackunder[3pt]{\includegraphics[scale=0.124]{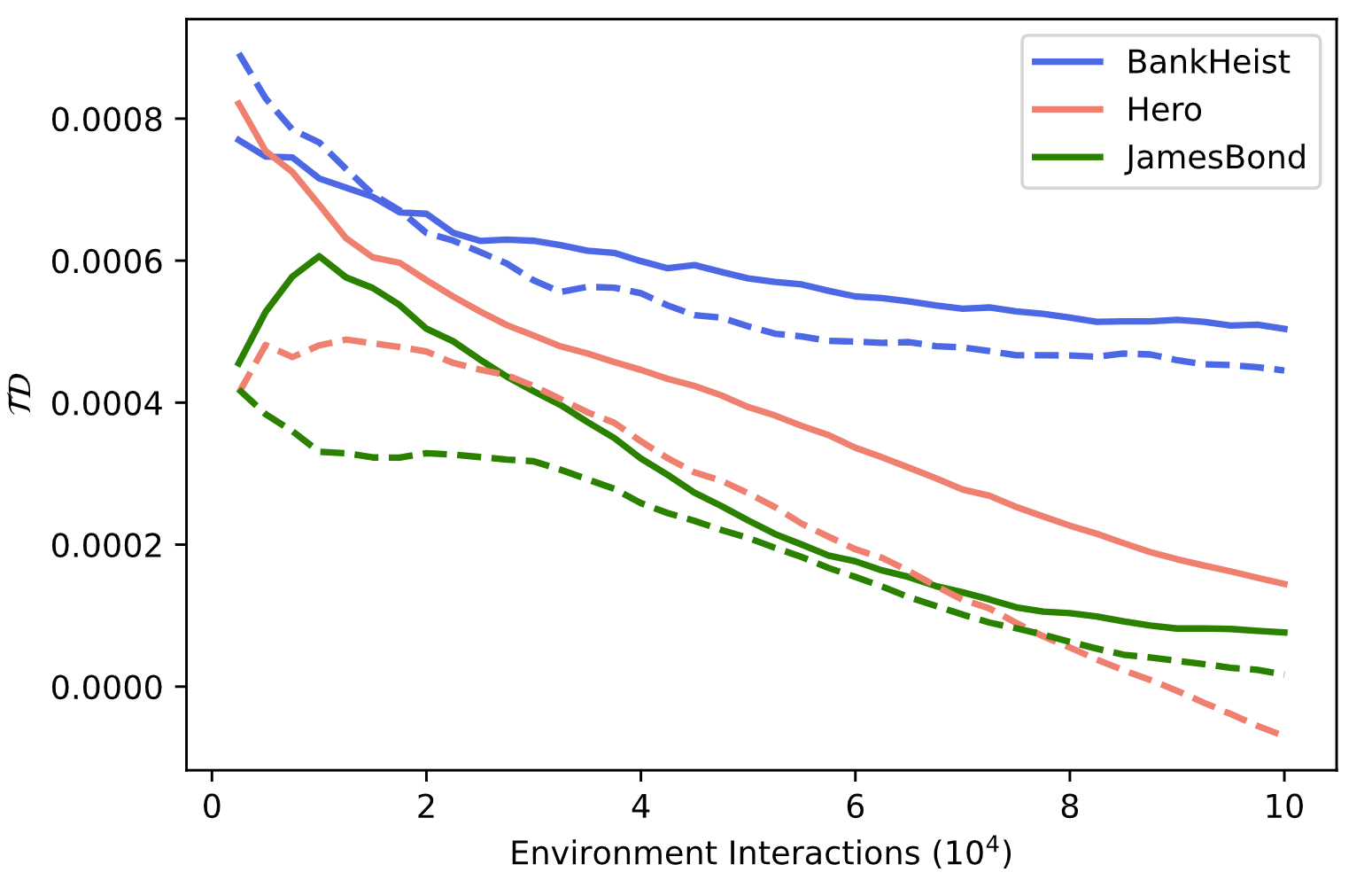}}{\scriptsize{}}
    \end{center}
    \vskip -0.22in
    \caption{Temporal difference for our proposed algorithm CoAct TD Learning and the canonical $\epsilon$-greedy algorithm in the Arcade Learning Environment 100K benchmark. Dashed lines report the temporal difference for the $\epsilon$-greedy algorithm and solid lines report the temporal difference for the CoAct TD Learning algorithm. Colors indicate games.}
    \vskip -0.14in
    \label{TDgames}
    \end{figure*}

\begin{figure*}[t!]
\footnotesize
\begin{center}
\stackunder[3pt]{\includegraphics[scale=0.14]{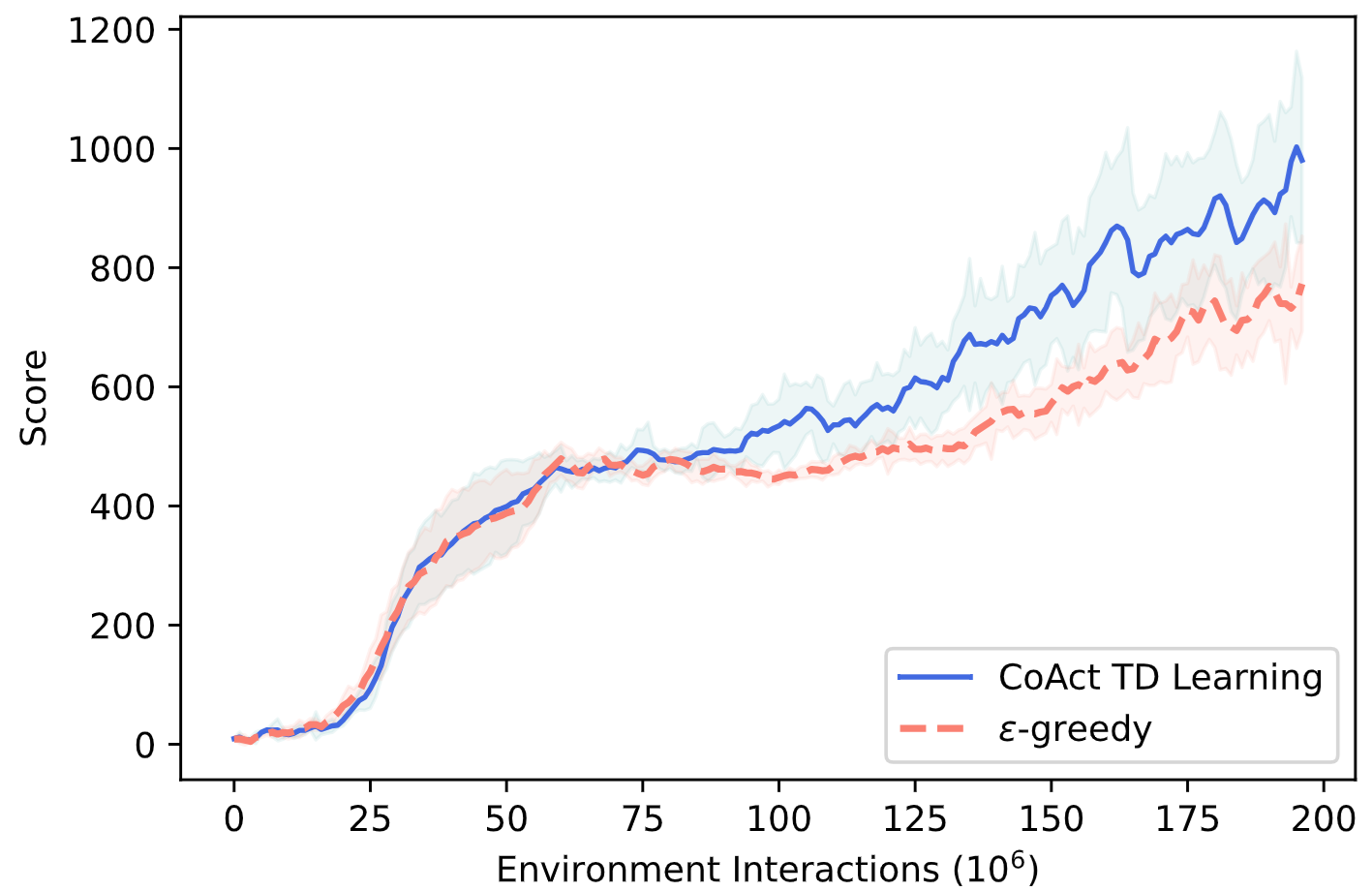}}{\scriptsize{}}
\stackunder[3pt]{\includegraphics[scale=0.14]{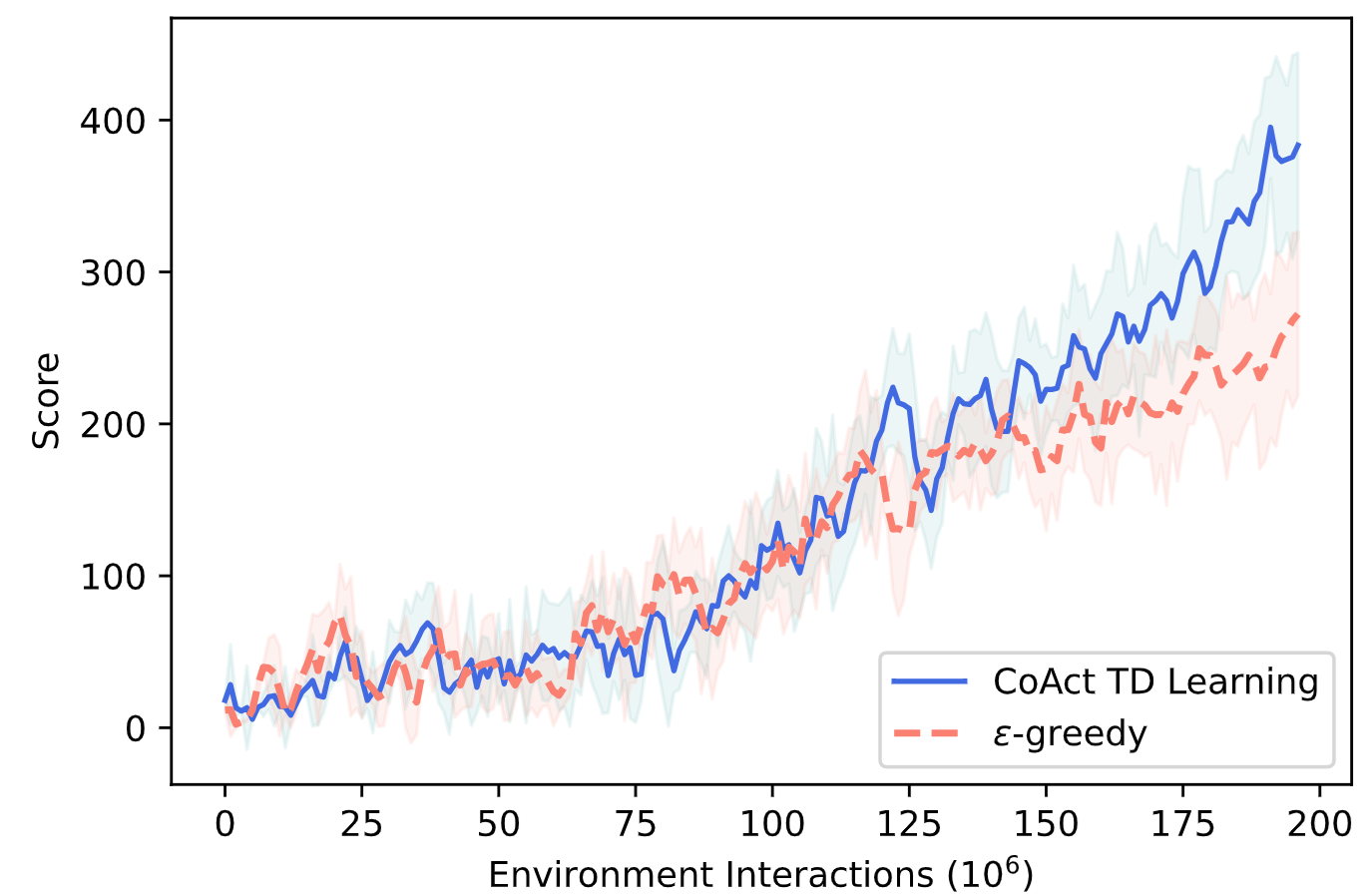}}{\scriptsize{}}
\stackunder[3pt]{\includegraphics[scale=0.14]{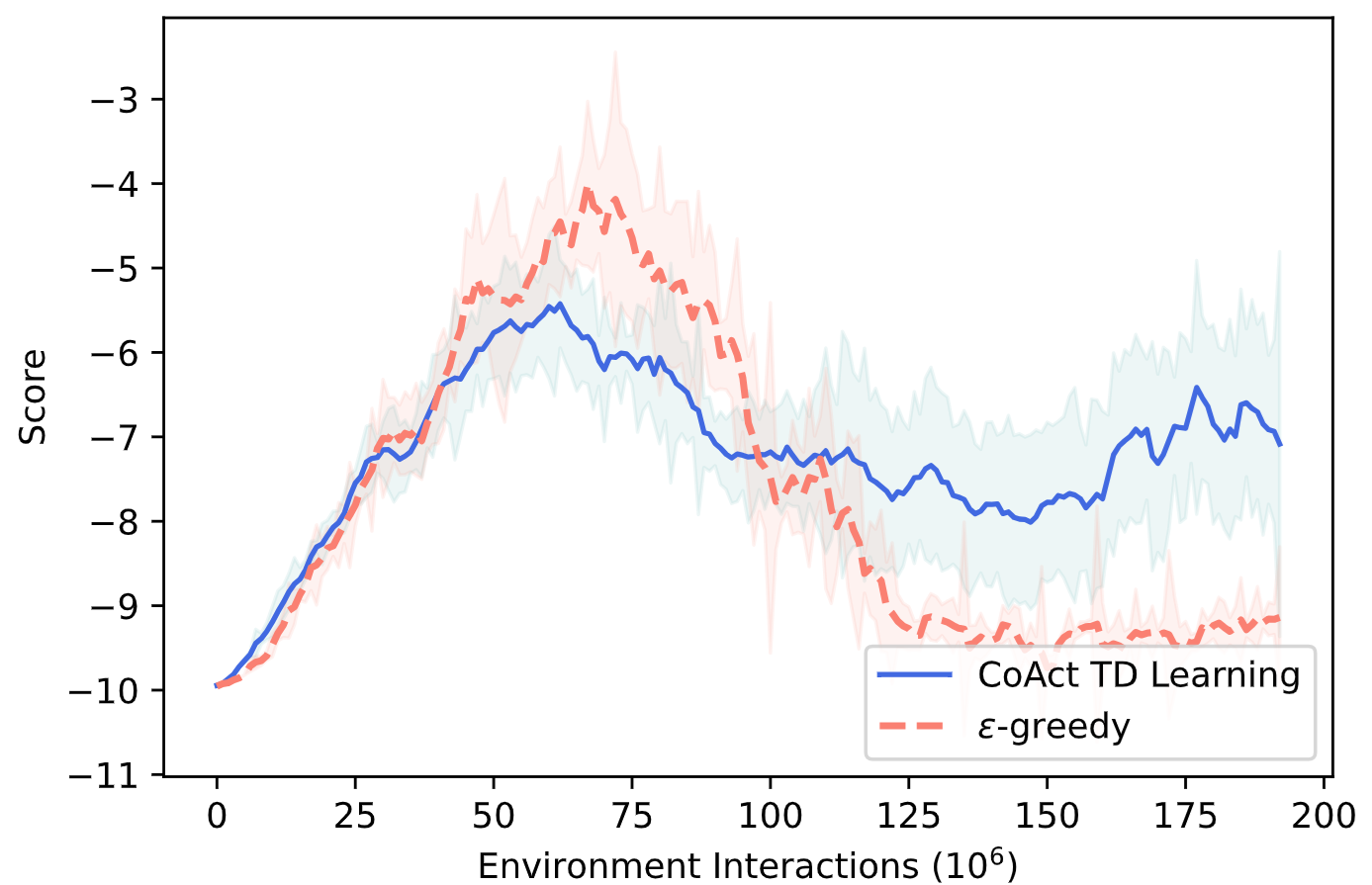}}{\scriptsize{}}
\stackunder[3pt]{\includegraphics[scale=0.14]{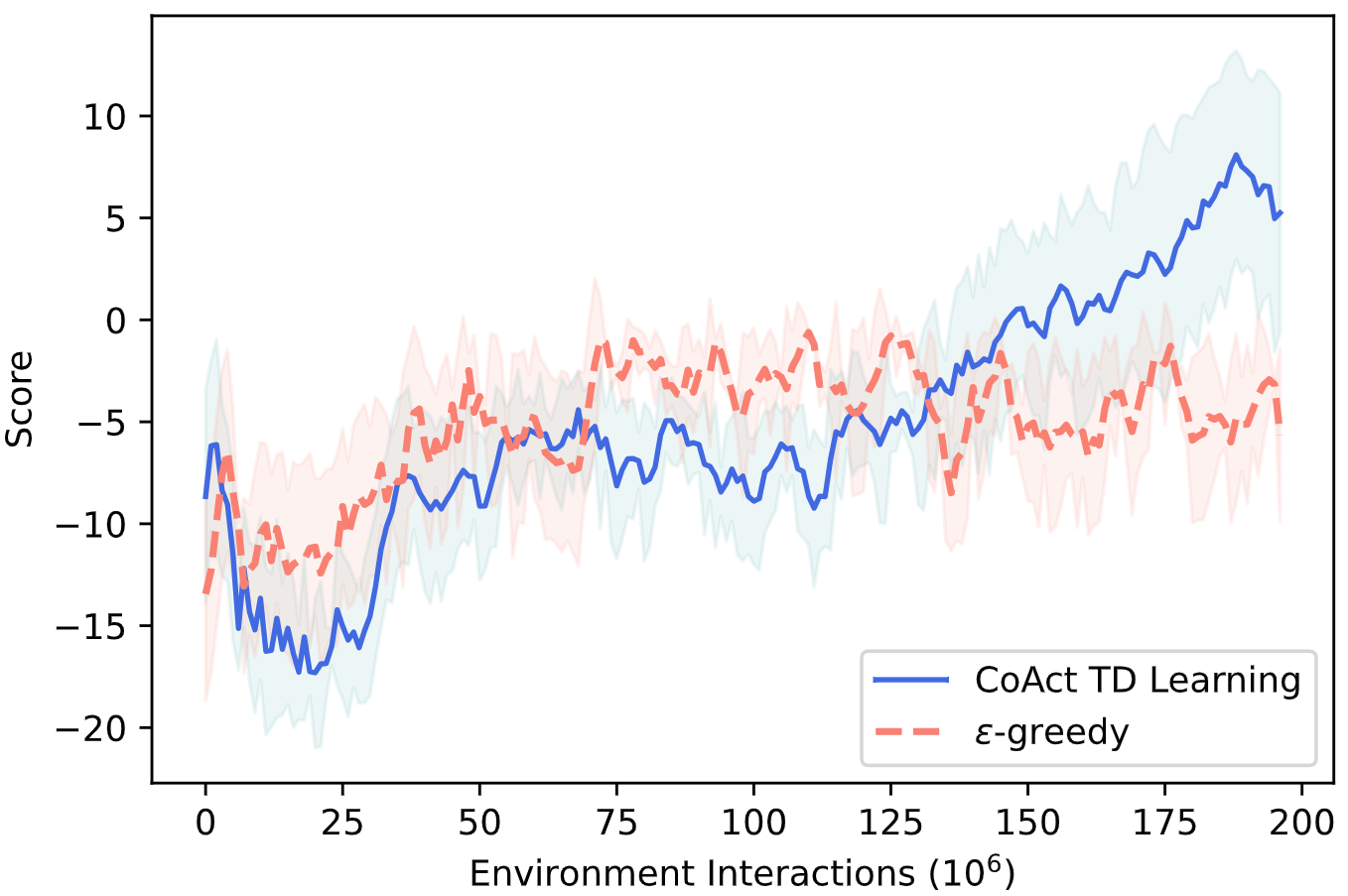}}{\scriptsize{}} \\
\end{center}
\vskip -0.18in
\caption{The learning curves for our proposed method CoAct TD Learning algorithm and canonical temporal difference learning in the Arcade Learning Environment with 200 million frame training. Left: JamesBond. MiddleLeft: Gravitar. MiddleRight: Surround. Right: Tennis. }
\vskip -0.12in
\label{200mil}
\end{figure*}

Figure \ref{qrdqn} reports results of human normalized median scores and $80^{\textrm{th}}$ percentile over all of the games of the Arcade Learning Environment (ALE) in the low-data regime for QRDQN, while Figure \ref{deltatdfigure} reports results for double Q-learning. 
The results reported in Figure \ref{qrdqn} once more demonstrate that the performance obtained by the CoAct TD Learning algorithm is approximately double the performance achieved by the canonical experience collection techniques. 
For completeness we also report several results with 200 million frame training (i.e. 50 million environment interactions).  
Furthermore, we also compare our proposed CoAct TD Learning algorithm with NoisyNetworks as referred to in Section \ref{exploration}.
Table \ref{exptable} reports results of human normalized median scores, 20$^{\textrm{th}}$ percentile, and 80$^{\textrm{th}}$ percentile for the Arcade Learning Environment 100K benchmark. 
Table \ref{exptable} further demonstrates that the CoAct TD Learning algorithm achieves significantly better performance results compared to NoisyNetworks.
Primarily, note that NoisyNetworks includes adding layers in the Q-network to increase exploration. 
However, this increases the number of parameters that have been added in the training process; thus, introducing substantial additional cost.
Figure \ref{200mil} demonstrates the learning curves for our proposed algorithm CoAct TD Learning and the original version of the DDQN algorithm with $\epsilon$-greedy training. In the large data regime we observe that while in some MDPs our proposed method CoAct TD Learning that focuses on experience collection with novel temporal difference boosting via counteractive actions
converges faster, in other MDPs CoAct TD Learning simply converges to a better policy.
\begin{wraptable}{r}{8.5cm}
    \vskip -0.24in
    \caption{Human normalized scores median, 20$^{\textrm{th}}$ and 80$^{\textrm{th}}$ percentile across all of the games in the Arcade Learning Environment 100K benchmark for CoAct TD Learning, $\epsilon$-greedy and NoisyNetworks with DDQN.}
    \label{exptable}
    \centering
    \scalebox{0.8}{
    \begin{tabular}{lccccccr}
    \toprule
    Method                   & CoAct TD 
                & $\epsilon$-greedy
                & NoisyNetworks \\
    \midrule
    Median                             & \textbf{0.0927$\pm$0.0050}     &  0.0377$\pm$0.0031         &   0.0457$\pm$0.0035 \\
    20$^{\textrm{th}}$ Percentile       & \textbf{0.0145$\pm$0.0003}     & 0.0056$\pm$0.0017          &    0.0102$\pm$0.0018  \\
    80$^{\textrm{th}}$ Percentile       &  \textbf{0.3762$\pm$0.0137}    & 0.2942$\pm$0.0233          &   0.1913$\pm$0.0144    \\
    \bottomrule
    \end{tabular}
    }
    \vskip -0.16in
    \end{wraptable} 
Table \ref{exptable} demonstrates that our proposed CoAct TD Learning algorithm improves on the performance of the canonical algorithm $\epsilon$-greedy by 248\% and NoisyNetworks by 204\%.
The results reported in both of the sample regimes demonstrate that CoAct TD learning results in faster convergence rate and significantly improves sample-efficiency in deep reinforcement learning.
The large scale experimental analysis verifies the theoretical predictions of Section \ref{worstq}, and further discovers that the CoAct TD Learning algorithm achieves substantial sample-efficiency with zero-additional cost across many algorithms and different sample-complexity regimes over canonical baseline alternatives.

\begin{figure*}[t]
\footnotesize
\vskip -0.155in
\begin{center}
\stackunder[3pt]{\includegraphics[scale=0.25]{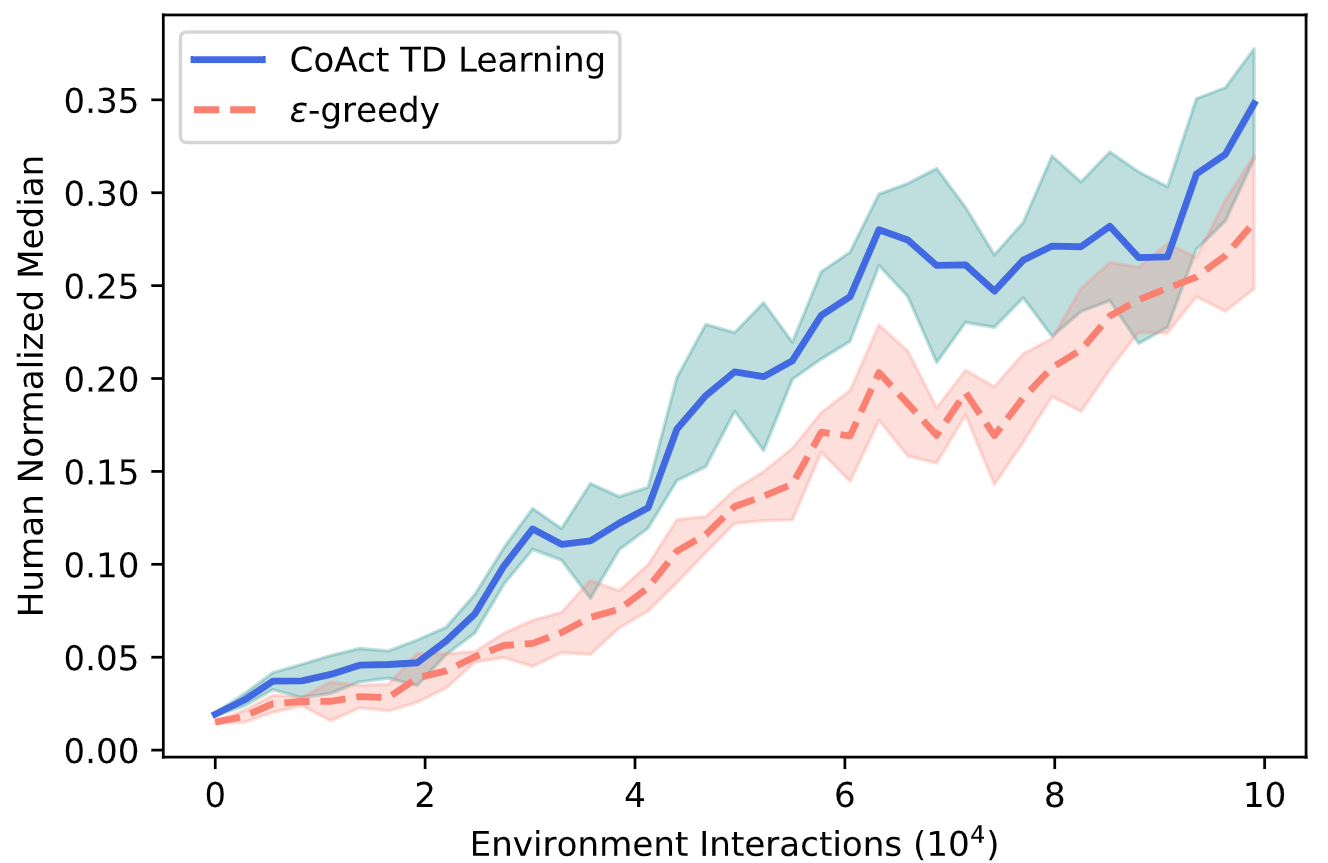}}{}
\stackunder[3pt]{\includegraphics[scale=0.24]{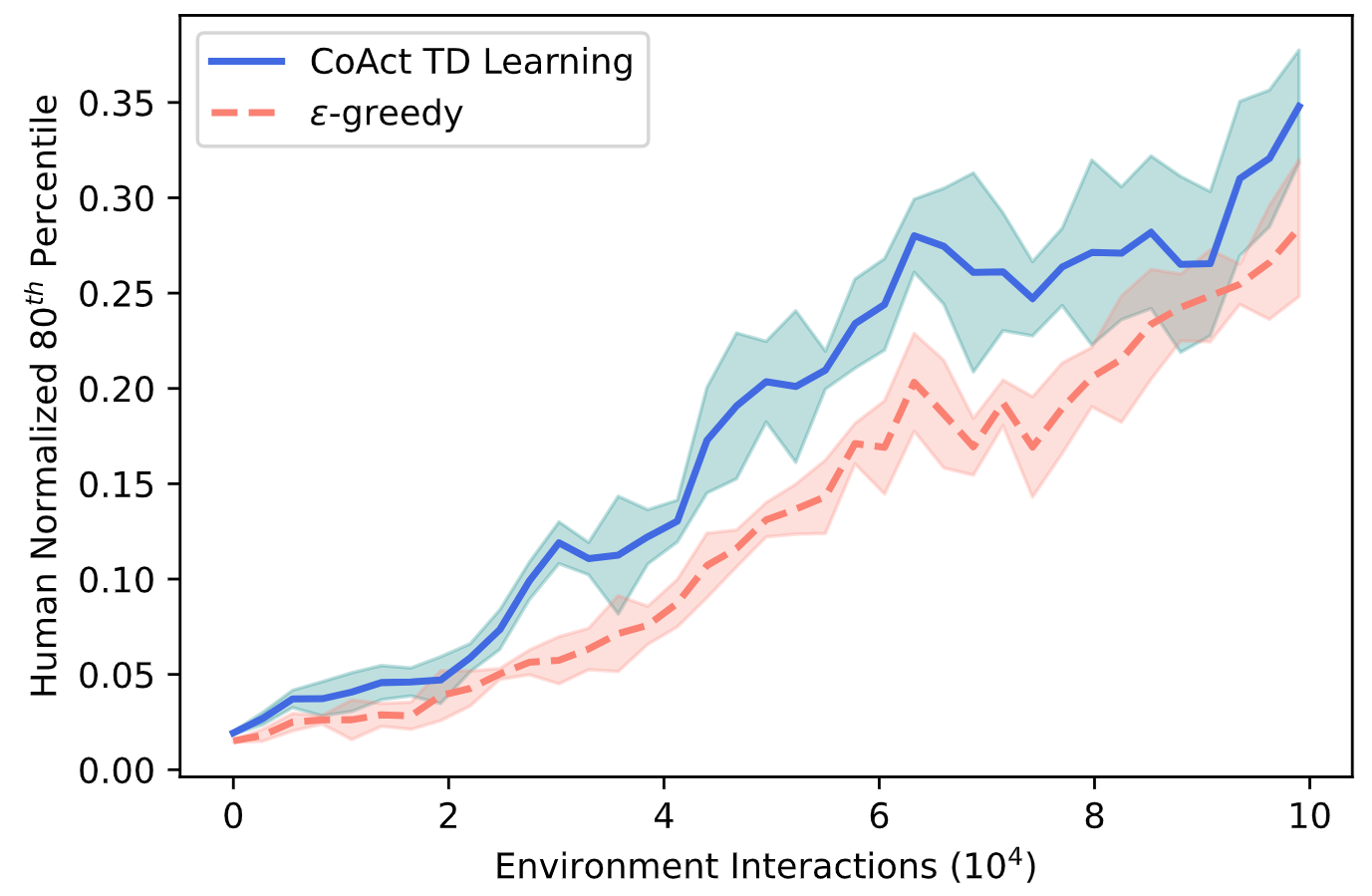}}{}
\end{center}
\vskip -0.225in
\caption{Human normalized scores median and 80$^{\textrm{th}}$ percentile over all games in the Arcade Learning Environment (ALE) 100K benchmark in DDQN for CoAct TD Learning algorithm and the canonical temporal difference learning with $\epsilon$-greedy. Left:Median. Right: 80$^{\textrm{th}}$ Percentile.}
\vskip -0.09in
\label{deltatdfigure}
\end{figure*}
\begin{figure*}[t!]
\footnotesize
\begin{center}
\vskip -0.1in
\stackunder[3pt]{\includegraphics[scale=0.19]{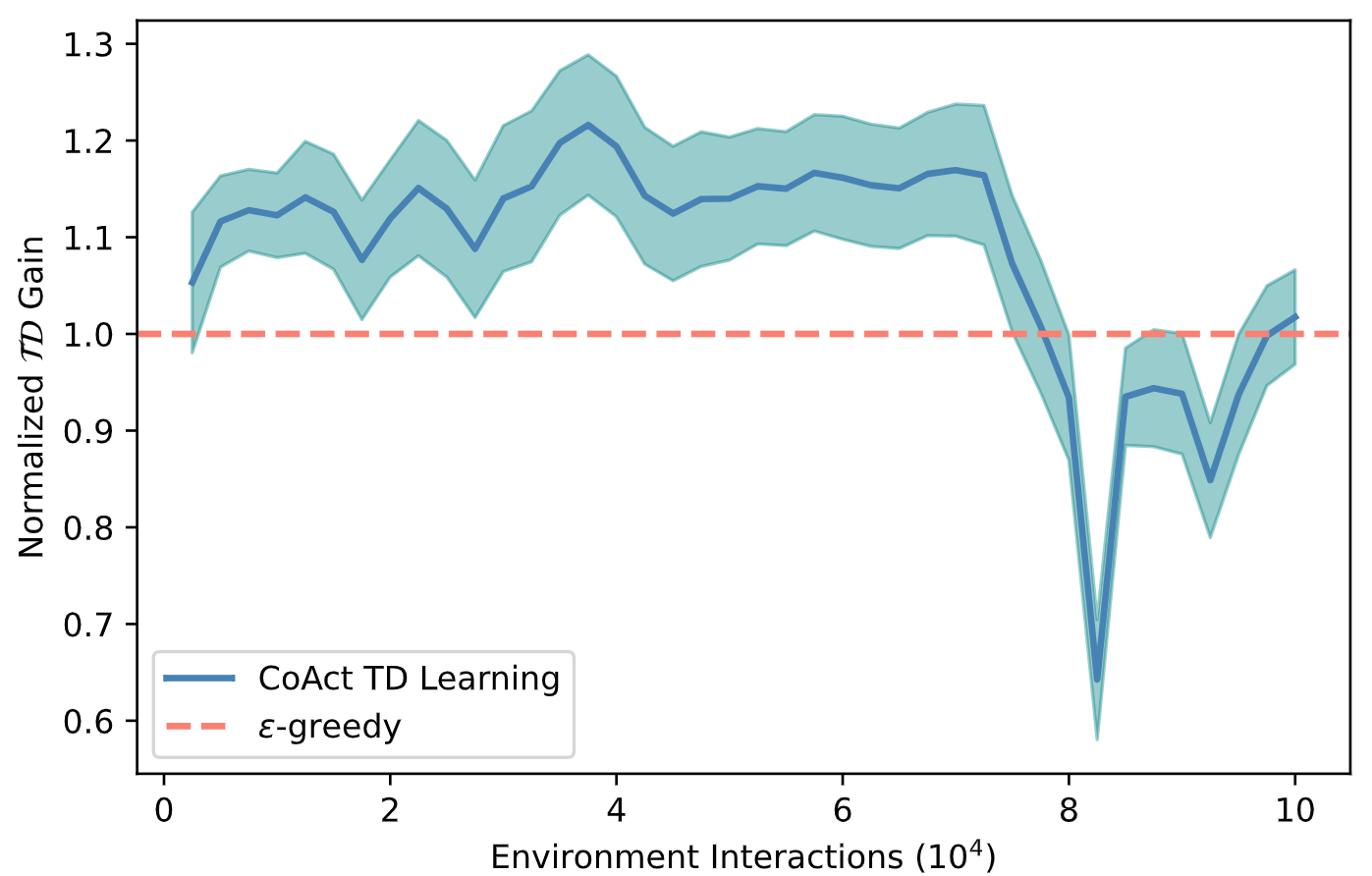}}{\scriptsize{}}
\stackunder[3pt]{\includegraphics[scale=0.18]{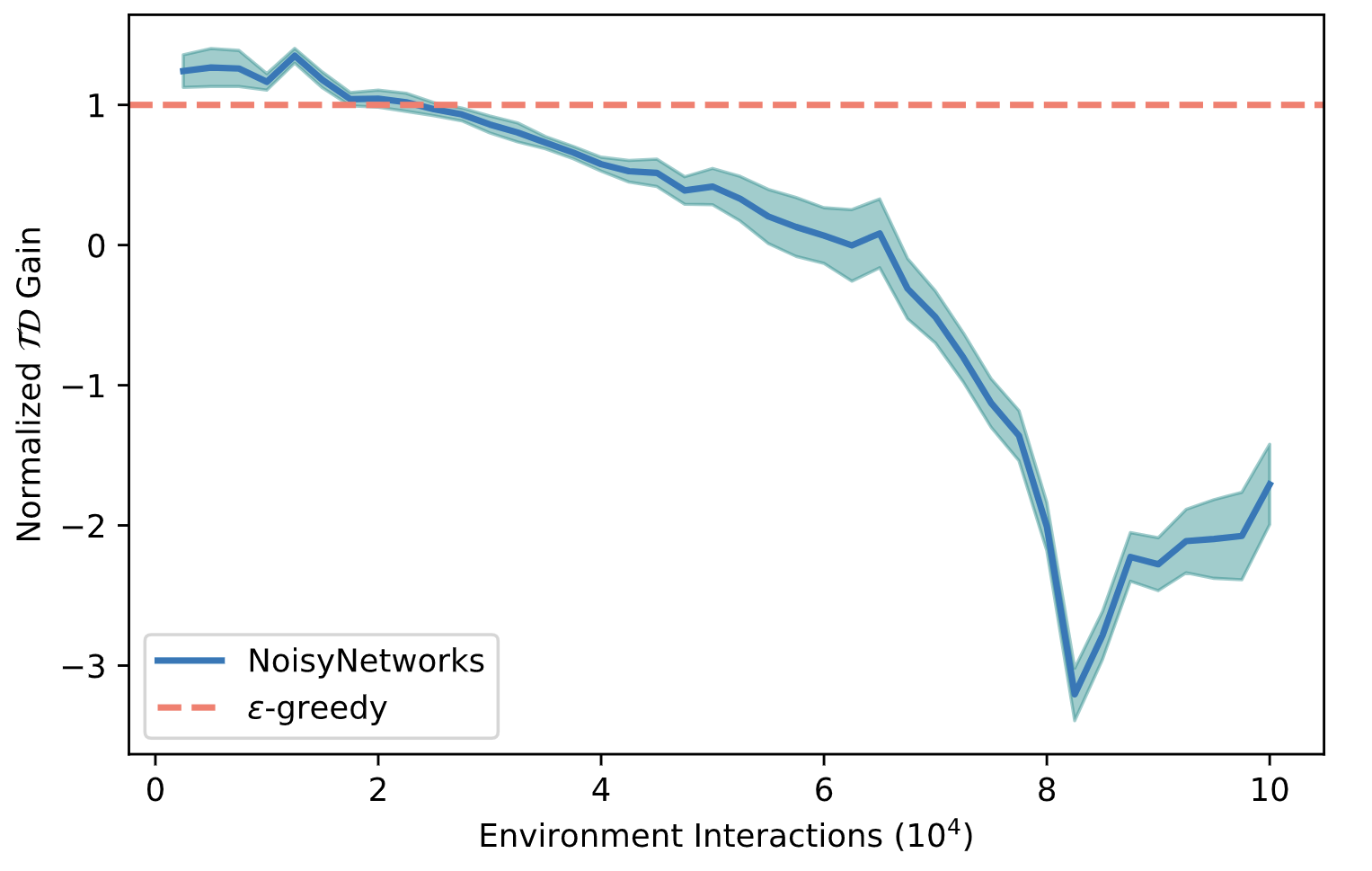}}{\scriptsize{}}
\stackunder[3pt]{\includegraphics[scale=0.192]{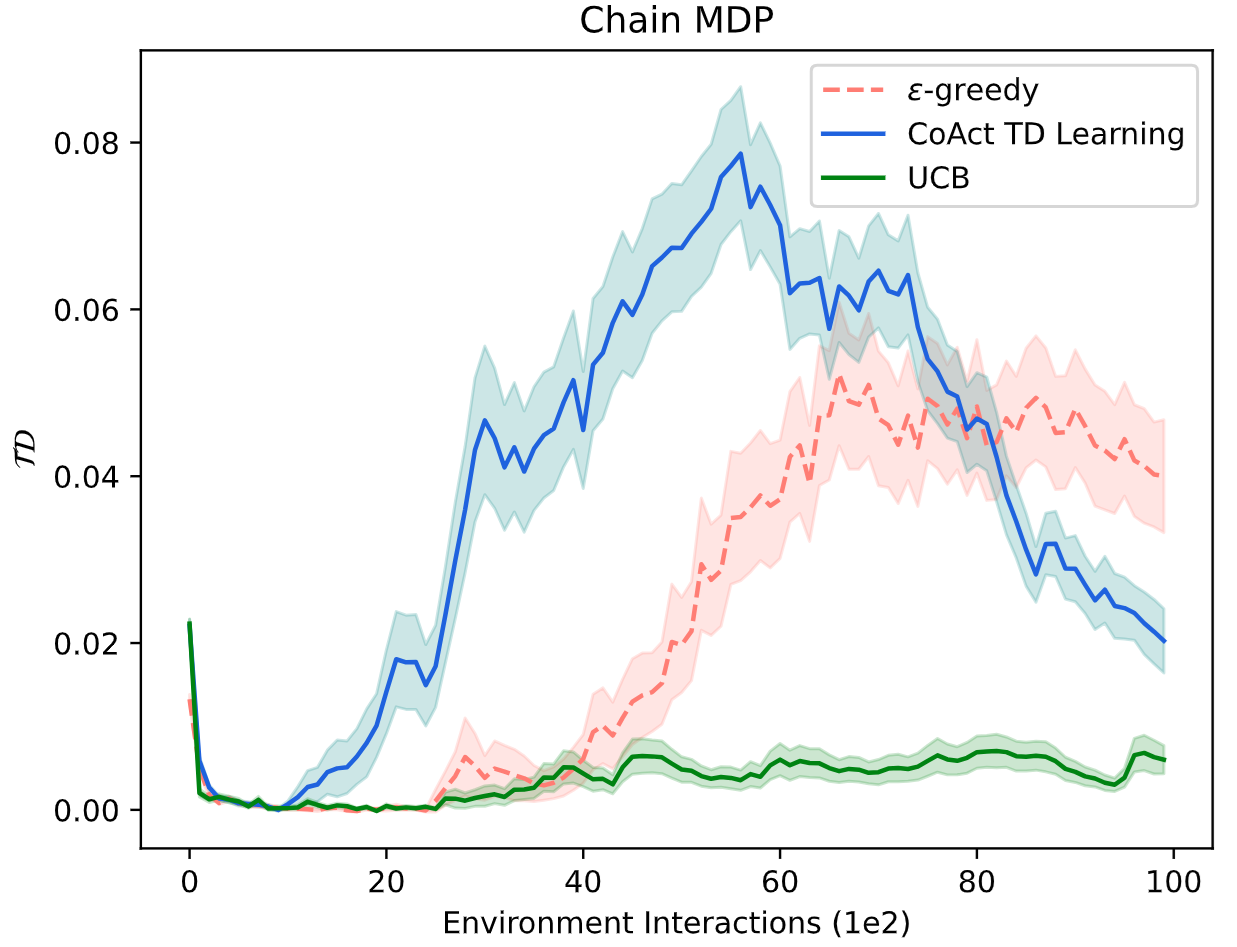}}{\scriptsize{}}
\end{center}
\vskip -0.19in
\caption{Left and Middle: Normalized temporal difference $\mathcal{TD}$ gain median across all games in the Arcade Learning Environment 100K benchmark for CoAct TD Learning and NoisyNetworks. Right: Temporal difference $\mathcal{TD}$ when exploring chain MDP with Upper Confidence Bound (UCB) method, $\epsilon$-greedy and our proposed algorithm CoAct TD Learning.}
\label{tdgain}
\vskip -0.14in
\end{figure*}

\section{Investigating the Temporal Difference}
\label{temporal}
In Section \ref{worstq} we provided the theoretical analysis and justification for collecting experiences with counteractive actions, i.e. the minimum $Q$-value action, in which counteractive actions increase the temporal difference.
Increasing temporal difference of the experiences results in novel transitions, and hence accelerates learning \citep{david97}.
The theoretical analysis from Theorem \ref{tdprop} and Theorem \ref{doubleqprop} shows that taking the minimum value action results in an increase in the temporal difference. 
In this section, we further investigate the temporal difference and provide empirical measurements of the TD.
To measure the change in the temporal difference when taking the minimum action versus the average action, we compare the temporal difference obtained by CoAct TD Learning with that obtained by several other canonical methods.
In more detail, during training, for each batch $\Lambda$ of transitions of the form $(s_t,a_t,s_{t+1})$ we record,
the temporal difference $\mathcal{TD}$
\begin{align*}
\mathbb{E}_{(s_t,a_t,s_{t+1}) \sim \Lambda}  \mathcal{TD}(s_t,a_t,s_{t+1}) 
= \mathbb{E}_{(s_t,a_t,s_{t+1}) \sim \Lambda} [r(s_t,a_t) 
+ \gamma\max_a Q_{\theta}(s_{t+1},a) - Q_{\theta}(s_t,a_t)].
\end{align*}
The results reported in Figure \ref{TDgames} and Figure \ref{tdgain} further confirm the theoretical predictions made via Definition \ref{tderrdef}, Theorem \ref{doubleqprop} and Theorem \ref{tdprop}.
In addition to the results for individual games reported in Figure \ref{TDgames}, we compute a normalized measure of the gain in temporal difference achieved when using CoAct TD Learning and plot the median across games. We define the normalized $\mathcal{TD}$ gain to be, 
$\textrm{Normalized}\:\mathcal{TD}\: \textrm{Gain} = 1 + (\mathcal{TD}_{\textrm{method}} - \mathcal{TD}_{\epsilon\textrm{-greedy}})/(\lvert \mathcal{TD}_{\epsilon\textrm{-greedy}} \rvert)$,
where $\mathcal{TD}_{\textrm{method}}$ and $\mathcal{TD}_{\epsilon\textrm{-greedy}}$ are the temporal difference for any given learning method and $\epsilon$-greedy respectively.
The leftmost and middle plot of Figure \ref{tdgain} report the median across all games of the normalized $\mathcal{TD}$ gain results for CoAct TD Learning and NoisyNetworks in the Arcade Learning Environment 100K benchmark. Note that, consistent with the predictions of Theorem \ref{tdprop}, the median normalized temporal difference gain for CoAct TD Learning is up to 25 percent larger than that of $\epsilon$-greedy. The results for NoisyNetworks demonstrate that alternate experience collection methods lack this positive bias relative to the uniform random action.
Further note that to guarantee that every action has non-zero probability of being chosen in every possible state for guaranteeing convergence of $Q$-learning, one can additionally introduce noise, and achieve higher temporal difference by CoAct TD learning. Albeit, across all of the experiments introduction of the noise was not necessary, as there is sufficient noise in the training process that satisfies the property for convergence. Hence, across all the benchmarks CoAct-TD Learning results in consistently and substantially better performance.
\textbf{CoAct TD Learning} is extremely modular, \textbf{only requires two lines of additional code} and can be used as \textbf{a drop-in replacement for any baseline algorithm} that uses the canonical methods. The fact that, as demonstrated in Table \ref{exptable}, CoAct TD Learning significantly outperforms noisy networks in the low-data regime is further evidence of the advantage the positive bias in temporal difference confers. The rightmost plot of Figure \ref{tdgain} reports $\mathcal{TD}$ for the motivating example of the chain MDP. As in the large-scale experiments, prior to convergence CoAct TD Learning exhibits a notably larger temporal difference relative to the canonical baseline methods, thus CoAct TD Learning achieves accelerated learning across domains from tabular MDPs to large-scale.

\vskip-0.5in

\section{Conclusion}
In our study we focus on the following questions: 
\textit{(i) Is it possible to maximize sample efficiency in deep reinforcement learning without additional computational complexity by solely rethinking the core principles of learning?,
(ii) What is the foundation and theoretical motivation of the learning paradigm we introduce 
that results in one of the most computationally efficient ways to explore in deep reinforcement learning?} and, 
\textit{(iii) How would the theoretically well-motivated approach transfer to high-dimensional complex MDPs?} 
To be able to answer these questions we propose a novel, theoretically motivated method with zero additional computational cost based on counteractive actions that minimize the state-action value function in deep reinforcement learning.
We demonstrate theoretically that our method CoAct TD Learning based on minimization of the state-action value results in higher temporal difference, and thus creates novel transitions with more unique experience collection.
Following the theoretical motivation we initially demonstrate in a motivating example in the chain MDP setup that our proposed method CoAct TD Learning results in achieving higher sample-efficiency. 
Then, we expand this intuition and conduct large scale experiments in the Arcade Learning Environment, and demonstrate that our proposed method CoAct TD Learning increases the performance on the Arcade Learning Environment 100K benchmark by $248\%$.

\bibliography{counteractiveRLbib}

\bibliographystyle{abbrvnat}

\end{document}

%% file: math_commands.tex
\usepackage{amsmath,amsfonts,bm}

\def\eqref#1{equation~\ref{#1}}
\def\1{\bm{1}}

\DeclareMathAlphabet{\mathsfit}{\encodingdefault}{\sfdefault}{m}{sl}
\SetMathAlphabet{\mathsfit}{bold}{\encodingdefault}{\sfdefault}{bx}{n}

\newcommand{\R}{\mathbb{R}}

\DeclareMathOperator*{\argmax}{arg\,max}
\DeclareMathOperator*{\argmin}{arg\,min}